\documentclass[preprint,11pt]{elsarticle}

\usepackage{amsmath,amssymb,amsxtra,mathtools,amsthm}

\journal{Neurocomputing}

\usepackage{algorithm}
\usepackage{algorithmic}
\usepackage{bm}
\usepackage{dsfont}
\usepackage{enumitem}
\usepackage{hyperref}
\usepackage{mathtools}
\usepackage{graphicx}
\usepackage{caption}
\usepackage{subcaption}
\usepackage{fullpage} 
%%%

\usepackage{color} 

%%%

%%% For main text

\def\ve#1{\mathchoice{\mbox{\boldmath$\displaystyle\bf#1$}}
{\mbox{\boldmath$\textstyle\bf#1$}}
{\mbox{\boldmath$\scriptstyle\bf#1$}}
{\mbox{\boldmath$\scriptscriptstyle\bf#1$}}}

\newcommand{\g}{{\ve g}}
\newcommand{\ind}[1]{\mathds{1}_{\left\{#1\right\}}}
\newcommand{\mymod}[1]{\ (\mathrm{mod}\ #1)}
\newcommand{\norm}[2][]{#1\lVert #2 #1\rVert}
\newcommand{\s}{\textrm{s}}
\newcommand{\w}{{\ve w}}
\newcommand{\x}{{\ve x}}

\newcommand{\z}{{\ve z}}
\newcommand{\A}{\textrm{A}}
\newcommand{\C}{\textrm{C}}
\newcommand{\E}{\mathbb{E}}
\newcommand{\I}{\textrm{I}}
\newcommand{\M}{\mathbf{M}}
\newcommand{\R}{\mathbb{R}}

\DeclareMathOperator*{\argmin}{argmin}
\DeclareMathOperator*{\relu}{{\textrm{ReLU}}}

%%%

%%% For appendix

\renewcommand{\a}{{\ve a}}
\newcommand{\q}{{\ve q}}

\definecolor{firebrick}{RGB}{178,34,34}

%%%

\newtheorem{assn}{Assumption}
\newtheorem{dfn}{Definition}
\newtheorem{lemma}{Lemma}
\newtheorem{rmrk}{Remark}
\newtheorem{thm}{Theorem}
\newcommand{\ignore}[1]{}
\begin{document}

\begin{frontmatter}

%% Title, authors and addresses

%% use the tnoteref command within \title for footnotes;
%% use the tnotetext command for theassociated footnote;
%% use the fnref command within \author or \address for footnotes;
%% use the fntext command for theassociated footnote;
%% use the corref command within \author for corresponding author footnotes;
%% use the cortext command for theassociated footnote;
%% use the ead command for the email address,
%% and the form \ead[url] for the home page:
%% \title{Title\tnoteref{label1}}
%% \tnotetext[label1]{}
%% \author{Name\corref{cor1}\fnref{label2}}
%% \ead{email address}
%% \ead[url]{home page}
%% \fntext[label2]{}
%% \cortext[cor1]{}
%% \affiliation{organization={},
%%             addressline={},
%%             city={},
%%             postcode={},
%%             state={},
%%             country={}}
%% \fntext[label3]{}

\title{Depth-2 Neural Networks Under a Data-Poisoning Attack}

%\subtitle{Non-Gradient Methods for Neural Network Training}

%\titlerunning{Short form of title}        % if too long for running head

%\authorrunning{Short form of author list} % if too long for running head

%% use optional labels to link authors explicitly to addresses:
%% \author[label1,label2]{}
%% \affiliation[label1]{organization={},
%%             addressline={},
%%             city={},
%%             postcode={},
%%             state={},
%%             country={}}
%%
%% \affiliation[label2]{organization={},
%%             addressline={},
%%             city={},
%%             postcode={},
%%             state={},
%%             country={}}

\author[inst1]{Sayar Karmakar}

\affiliation[inst1]{organization={Department of Statistics, University of Florida},%Department and Organization
            addressline={230 Newell Drive}, 
            city={Gainesville},
            postcode={32611}, 
            state={FL},
            country={USA}}

\author[inst2]{Anirbit Mukherjee}

\affiliation[inst2]{organization={Department of Computer Science, The University of Manchester},%Department and Organization
            addressline={Kilburn Building, Oxford Road}, 
            city={Manchester},
            postcode={M13 9PL},
            country={UK}}
            
\author[inst3]{Theodore Papamarkou}

\affiliation[inst3]{organization={Department of Mathematics, The University of Manchester},%Department and Organization
            addressline={Alan Turing Building, Oxford Road}, 
            city={Manchester},
            postcode={M13 9PL},
            country={UK}}

\begin{abstract}
In this work, we study the possibility of defending against data-poisoning attacks while training a shallow neural network in a regression setup. We focus on doing supervised learning for a class of depth-2 finite-width neural networks, which includes single-filter convolutional networks. In this class of networks, we attempt to learn the network weights in the presence of a malicious oracle doing stochastic, bounded and additive adversarial distortions on the true output during training. For the non-gradient stochastic algorithm that we construct, we prove worst-case near-optimal trade-offs among the magnitude of the adversarial attack, the weight approximation accuracy, and the confidence achieved by the proposed algorithm. As our algorithm uses mini-batching, we analyze how the mini-batch size affects convergence. We also show how to utilize the scaling of the outer layer weights to counter output-poisoning attacks depending on the probability of attack. Lastly, we give experimental evidence demonstrating how our algorithm outperforms stochastic gradient descent under different input data distributions, including instances of heavy-tailed distributions.
\end{abstract}

\begin{keyword}Convolutional neural networks, stochastic algorithms, data poisoning, robust regression
\end{keyword}

\end{frontmatter}

\section{Introduction}

The seminal paper by \citet{szegedy2013intriguing} was among the first to highlight a key vulnerability in state-of-the-art neural network architectures such as GoogLeNet, that adding small imperceptible adversarial noise to test data can dramatically impact the performance of the network. In these cases, despite the vulnerability of the predictive models to the distorted input,  
human observers are still able to correctly classify adversarially corrupted data. 

In the last few years, experiments with adversarially attacked test data have been replicated on several state-of-the-art neural network implementations \citep{goodfellow2014explaining,papernot2017practical,behzadan2017vulnerability,huang2017adversarial}. 
This phenomenon has also resulted in new adversarial defenses being proposed to counter the attacks. Such empirical observations have been systematically reviewed in \citet{akhtar2018threat,qiu2019review}.
On the other hand, the case of data-poisoning or adversarially attacked training data \citep{wang2018data, zhang2019online, koh2018stronger, schwarzschild2021just}  has received much less attention from theoreticians - and in this work, we take some steps towards bridging that gap. 

%Here we quickly review the conventional and the more studied mathematical setup of \textit{adversarial risk} :
%suppose ${\cal Z}$ is the measure space where the data lives, ${\cal H}$ is the hypothesis space where the predictor is %being searched, and suppose the loss function is mapping, $\ell : {\cal H} \times {\cal Z} \rightarrow \R^+$. 
%Then an {\it adversarial attack} is a set map  $\bm{A} : {\cal Z} \rightarrow 2^{\cal Z}$. 
%%{\color{red} is this the right notation for saying set of subsets of Z?)}
%If ${\cal D}$ is the data distribution/probability measure on $Z$ then given an adversarial attack $\bm{A}$, the {\it %adversarial risk} of a hypothesis $h \in {\cal H}$ is defined as, $\mathcal{R}(h ; \mathcal{D},\ell, \bm{A}) := \E_{z \sim %{\cal D}} \left [ \sup_{\tilde{z} \in \bm{A}(z)} \ell(h,\tilde{z}) \right ]$ 

% \begin{align*} 
% \mathcal{R}(h ; \mathcal{D},\ell, \bm{A}) := \E_{z \sim {\cal D}} \left [ \sup_{\tilde{z} \in \bm{A}(z)} \ell(h,\tilde{z}) \right ]
% \end{align*} 

% The adversarial learning task is to find 
% $h_* \in {\rm argmin}_{h \in \mathcal{H}} \mathcal{R}(h ; \mathcal{D}, \ell, \bm{A})$. 
% Any hypothesis $h$ with low adversarial risk will be stable to attacks of type $\bm{A}$ on the test data. 

An optimization formulation of adversarial robustness, in terms of adversarial risk minimization on the test data, has been extensively explored in recent years; % leading to an enormous amount of literature being produced in just a couple of years.
%Here we mention a few of the many insightful attempts at understanding this optimization problem 
multiple attack strategies have been systematically catalogued in %works like
\citet{dou2018mathematical,lin2019nesterov,song2018improving}, computational hardness of finding an adversarial risk minimizing hypothesis has been analyzed in \citet{bubeck2018adversarial,degwekar2019computational,schmidt2018adversarially,montasser2019vc}, 
the issue of certifying adversarial robustness of a given predictor has been analyzed in \citet{raghunathan2018certified,raghunathan2018semidefinite},
and %works like
bounds on the Rademacher complexity of adversarial risk have been explored in \citet{yin2018rademacher,khim2018adversarial}.
While these previous works have been tuned to classification tasks, we consider the less explored case of adversarial attacks to neural networks used for regression tasks. Towards this end, our first key step is to make a careful choice of the neural network class to work with, as given in Definition \ref{net}. 

For the optimization algorithm, we draw inspiration from the different versions of  iterative stochastic non-gradient algorithms analyzed in the past \citep{rosenblatt1958perceptron, 
Pal1992MultilayerPF,
freund1999large, 
kakade2011efficient, 
klivans2017learning, 
goel2017learning, 
goel2018learning}.
We generalize this class of algorithms in the form of Algorithm \ref{tronalgo}. By allowing for arbitrary weights in the outer layer, we further expand the class of neural networks beyond the ambit of existing results.
Subsequently, we run Algorithm \ref{tronalgo} to train a neural network
in the presence of an adversarial oracle that makes additive bounded perturbations to the true output of the network. Our theory establishes that there is a near-optimal guarantee of recovery of true weights that our algorithm can achieve. 

%{\it Beyond boundedness, we impose no other distributional constraint on how the adversary additively distorts the true output.} 

\subsection{Related work} 

The existing studies of data-poisoning attacks on neural training have mostly focused on the classification setting. The major kinds of data-poisoning attacks on classifiers and attempts at defending against them can be grouped into three categories. Firstly, in backdoor attacks, the adversary
injects strategically manipulated data \citep{gu2017badnets}. Defence mechanisms against backdoor attacks have been proposed in \citet{liu2018fine,tran2018spectral}. Secondly, in clean label attacks, the adversary does not modify the labels of the corrupted data \citep{shafahi2018poison,zhu2019transferable}. Thirdly, in label flip attacks, the adversary changes the labels of a constant fraction of the data \citep{biggio2011support,xiao2018feature,zhao2017efficient}. In the case of Massart noise, a robust half-space learning algorithm was analyzed in \citet{diakonikolas2019distribution}. For more general predictors, the idea of randomized smoothing \citep{cohen2019certified} has recently been extended and empirically shown to be capable of getting classifiers which are pointwise certifiably robust to label flip attacks \citep{rosenfeld2020certified}.

Specifically for the setup of regression, to the best of our knowledge previous guarantees on achieving robustness against data-poisoning attacks have been limited to linear functions. Further, they have either considered corruptions that are limited to a small subset \citep{jagielski2018manipulating, xiao2018feature} of the input space or have made structural assumptions on the corruption. Despite the substantial progress with understanding robust linear regression \citep{xu2008robust,xu2015show,10.5555/2968826.2968855,10.5555/3042817.3043023, 10.1145/3128572.3140447, li2011compressed, Laska_exactsignal, nguyen2011exact}, the corresponding questions have remained open even for simple neural networks.

Theoretical progress in understanding the limitations of training of a neural network under adversarial attacks on test data or training data has been restricted to deep kernel learning, which is associated with asymptotically large networks \citep{gao2019convergence,li2020gradient}. 
Developments along these lines have been made by \citet{pmlr-v139-wang21r}, where the performance of stochastic gradient descent (SGD) is theoretically established when using it to train asymptotically large neural networks to perform classification in the presence of data-poisoning attacks.
% As we will see in our experiments in Section \ref{sec:compsgd}, our Algorithm %\ref{tronalgo} outperforms SGD
%in the training of finitely large neural networks under the data-poisoning 
%attack that we study here.  

%This result as it stands only applies when the net is asymptotically large/in the {\rm NTK} regime. The authors here were able to establish that (deterministic) gradient descent can minimize the adversarial empirical risk on asymptotically large networks.

Thus, it has been an open challenge to demonstrate an example of provably robust training when (a) the trained neural network is finite, (b) the training algorithm has the common structure of being iterative and stochastic, and (c) the training data is being adversarially attacked. 

In this work, we take a few steps towards this goal by developing a framework inspired
by the causative attack model~\citep{steinhardt2017certified,barreno2010security}.\footnote{The
learning task in the causative attack model is framed as a game between a defender who seeks to learn and an attacker who aims to prevent this. In a typical scenario, the defender draws a finite number of samples from the true distribution (say $S_c$) and for some $\epsilon \in (0,1)$ the attacker mixes into the training data a set $S_p$ of (maybe adaptively) corrupted training samples such that $\vert S_p\vert = \epsilon \vert S_c\vert$. Now the defender has to train on the set $S_p \cup S_c$. We note that our model is not the same as the causative attack model because we allow for an arbitrary fraction (including all) of the training data to be corrupted in an online fashion by the bounded  additive adversarial attack on the true output.}

\subsection{Summary of results and outline} 

The model of data-poisoning that we consider
includes an additive distortion of the true output.
For every data point,
the additive distortion
is a sample from a possibly different distribution.
On the other hand, a typical model with additive noise
relies on the assumption that all additive distortions
are sampled from a single distribution~\citep{goel2018learning}.
Thus, the assumption of varying distribution of distortions
is what makes a data-poisoning model distinct from
a typical model of noisy output based on additive noise.

Our adversarial neural network herein is
a multi-gate generalization of the single-gate model in \cite{KARMAKAR2022264}.
Furthermore, we allow adversarial attacks to the multi-gate model
that are capable of preventing optimal learning,
as evidenced by our lower bound on the achievable accuracy of recovering
the model parameters from the intact data (Section \ref{opti:tron}). 

The main result (Theorem \ref{main2}) shows that our proposed Algorithm \ref{tronalgo} achieves
a trade-off between accuracy, confidence and maximum allowed adversarial perturbation
while learning the neural network parameters.
This trade-off provides a performance guarantee
that holds (i) in the presence of finitely many gates,
(ii) under an online adversarial attack that does not assume access
to the whole training data at the start of training,
and (iii) for any probability of attack.
To the best of our knowledge, there does not exist in the literature of neural network training any other
such performance guarantee with all these three conditions being simultaneously true.

The parameter recovery accuracy of Algorithm \ref{tronalgo},
% on the recovery the neural network parameters,
as guaranteed in Theorem \ref{main2},
has two salient features.
Firstly, our defense can defeat an adversarial attack
in some scenarios,
in the sense that the risk of the learnt predictor
can be lower than the maximum adversarial distortion (Section \ref{sec:risk}).
Secondly, the accuracy of parameter recovery improves
by upscaling the weights of the second layer (Section \ref{ssc:highq}).

% Thirdly, we quantify how accuracy degrades
% when the probability of attack or the magnitude of attack increase 
% (see also Figure~\ref{fig:tron_varying_theta_and_beta} in Section~\ref{sec:sim}).

In Section \ref{sec:sim},
we provide empirical evidence that our Algorithm \ref{tronalgo}
attains higher parameter recovery accuracy and faster rate of convergence than SGD.
While a proof of this claim remains an open research question,
our simulation-based observations are consistent
across different input data distributions, probabilities of adversarial attack
and magnitudes of adversarial attack. We also draw the attention of the reader to the experiment in Figure \ref{fig:tron_vs_sgd_tdist} showing that Neuro-Tron outperforms SGD even when the data is sampled from the Student's $\mbox{t}(\nu=4)$ distribution, which does not have enough number of finite moments to be covered by the assumptions of the main Theorem~\ref{main2}. This experiment in particular strongly motivates exciting directions of future research.

\paragraph{Outline of the paper}
In Section \ref{sec:setup}, we give the mathematical setup of the neural networks, distributions and data-poisoning adversary that we use, and define our learning algorithm (Algorithm \ref{tronalgo}).
In Section \ref{sec:thm},
we state our main result,
which pertains to the parameter recovery accuracy of Algorithm~\ref{tronalgo},
as Theorem \ref{main2}.
The proof of Theorem \ref{main2} is given in Appendix \ref{sec:proofthm1},
while appendices~\ref{sec:proofthm1_2}-\ref{app:rec}
contain several lemmas that are needed in the proof.
In Section \ref{opti:tron}, we explain that the accuracy-confidence-attack trade-off we obtain is nearly optimal in the worst-case.
In Section~\ref{ssc:highq}, we propose a way of improving
the accuracy of parameter recovery for Algorithm \ref{tronalgo}
by upscaling the outer layer weights.
In Section \ref{sec:sim}, we perform a simulation study of Algorithm \ref{tronalgo},
among else comparing it with SGD.
We conclude in Section \ref{sec:conc} by motivating relevant directions of future research.

% \mynote{
% {\bf This needs to be changed?}
% ~\\ 
% In Section \ref{sec:sim} we give experimental demonstrations of the performance of our Algorithm \ref{tronalgo} and show how it is more robust than SGD against the considered adversarial attacks.
% If I remember well, I had commented this out,
% unless by mistake I rejected my own change.
% Let's simply remove any statements about comparisons with SGD.
% }
% In Section \ref{sec:proofsketch} we give a sketch of the proof of Theorem \ref{main2}.

%\mynote{ The second layer section needs to be mentioned }

\section{Mathematical setup}
\label{sec:setup}

% As alluded to earlier we move away from the adversarial risk framework.
In a supervised learning setting,
consider observed data pairs
$\z = (\x,y) \in \mathcal{Z} := \mathcal{X}\times \mathcal{Y}$,
where $\x$ is the input in a measure space $\mathcal{X}$ and
$y$ is the output in a measure space $\mathcal{Y}$.
%We work in the supervised learning framework where we observe (input,output) data pairs $\z = (\x,y) \in \mathcal{Z} := 
%\mathcal{X}\times \mathcal{Y}$ where $\mathcal X$ and $\mathcal Y$ are measure spaces.
Let $\mathcal{D}$ be the distribution of $\z$ over $\mathcal{Z}$
and let $\mathcal{D}_{\x}$ be the marginal distribution of the input $x$ over $\mathcal{X}$.
Consider a loss function $\ell : {\cal H} \times {\cal Z} \rightarrow \R^+$,
where ${\cal H}$ is the hypothesis space for a learning task performed by a neural network.
We model an adversarial oracle as a map $\bm{O}_{\bm A} : \mathcal{Z} \mapsto \mathcal{Z}$ that corrupts the data $\z \in \mathcal{Z}$ with the intention of impeding the learning task.
The uncorrupted data $\z \sim \mathcal{D}$ are not observed, so they are treated as a latent variable.
Only the corrupted data
% $\z_{\text{adv}} := \bm{O}_{\bm A} (\z)$
$\bm{O}_{\bm A} (\z)$
are observed.
The aim is to find a hypothesis $h \in \mathcal{H}$ that minimizes the true risk
$\mathcal{R}(h; \mathcal{D}) := \E_{\z \sim {\cal D}} \left [ \ell(h,z) \right ]$.

In this work, we consider the case of
input space $\mathcal{X} = \R^n$,
output space $\mathcal{Y} = \R$,
square loss function $l$
and hypothesis space $\mathcal{H}$
of the class $\mathcal{F}_{k,\alpha,\mathcal{A},\mathcal{W}}$
of depth-2 width-k neural networks.
The class $\mathcal{F}_{k,\alpha,\mathcal{A},\mathcal{W}}$
is specified in Definition \ref{net}.

\begin{dfn}[Single-filter neural networks of depth $2$ and width $k$]
\label{net} 
%The function class $\mathcal{F}_{k,\alpha,\mathcal{A}}  = \{f_{\w} : \mathcal{X} \rightarrow \mathcal{Y} ~|~ \w \in \mathcal{W} \}$ is composed of depth-$2$ width-$k$ neural networks described by 
Given a set of $k$ sensing matrices $\mathcal{A} = \{\A_i \in \R^{r\times n} \mid i = 1,\ldots,k \}$, an $\alpha$-leaky $\relu$ activation mapping
% $ \R \ni y \mapsto \sigma(y) =  y{\bf 1}_{y \geq 0} + \alpha y{\bf 1}_{y <0} \in \R$
$ \sigma(y) =  y{\bf 1}_{y \geq 0} + \alpha y{\bf 1}_{y <0}$
and a filter space ${\cal W} \subseteq \R^r$, we define the function class $\mathcal{F}_{k,\alpha,\mathcal{A}, {\cal W}}$ as
% \begin{align*} 
%  \mathcal{F}_{k,\alpha,\mathcal{A},{\cal W}}  := \Big \{f_{\w} : \R^n \ni \x \mapsto  \frac{1}{k} \sum_{i=1}^k \sigma \Big ( \w^\top \A_i\x \Big )  \in \R ~|~ \w \in \mathcal{W} \Big \}
% \end{align*}
\begin{align*} 
 \mathcal{F}_{k,\alpha,\mathcal{A},{\cal W}}  :=
 \left \{f_{\w} : \R^n \rightarrow \R~\mbox{with}~
 f_{\w} (\x) = \frac{1}{k} \sum_{i=1}^k \sigma \left ( \w^\top \A_i\x \right ) ~|~ \w \in \mathcal{W} \right \}.
\end{align*}
\end{dfn}

% \footnotetext{$\sigma : \R \mapsto \R$, 
% $ \sigma(y) := y{\bf 1}_{y \geq 0} + \alpha y{\bf 1}_{y <0}$. We note that $\alpha = 0$ gives the standard $\relu$ gate.}
%Here we exclude origin $\mathbf{0}$ from the filter space $\mathcal{W}$. Given a filter 
% $\w \in \mathcal{W}$ and an input $x\in \mathcal{X}$, a network $f_{\w}\in \mathcal{F}_{k,\alpha,\mathcal{A}}$ computes, 
% \[ f_{\w}(\x) := \frac{1}{k} \sum_{i=1}^k \sigma \Big ( \w^\top \A_i\x \Big ) \]
%Hence the above is a subset of the space of all depth $2$ width $k$, $\R^n \rightarrow \R$ neural nets with a $\alpha-$Leaky $\relu$ activation.  

Note that the above class of neural networks encompasses the following common instances; (a) single $\relu$ gates as $\mathcal{F}_{1,0,\{ {\rm I}_{n \times n}\},\R^n }$; (b) depth-$2$ width-$k$ convolutional neural networks, when each sensing matrix $\A_i$ has exactly one $1$ in each row, at most one $1$ in each column, and $0$ in all other entries. 

% \begin{itemize}
%     \item \textbf{Depth-$2$, Width-$1$ Neural Networks $\mathcal{F}_{1,\alpha,\{\A\}}$}  : Here, $f_{\w}(\x) = \sigma(\w^T \A x)$, that is we have a single $\relu$ gate.
      
%     \item \textbf{Depth-$2$, Width-$k$ Convolutional  $\mathcal{F}_{k,\alpha,\mathcal{A}_{\text{conv}}}$} : When sensing matrices $\A_i$ have exactly one $1$ in each row, at most one $1$ in each column and rest are zero. 
% \end{itemize}

%\paragraph*{{\bf Optimization with an Adversarial Oracle}}
%\paragraph{Formally defining the optimization problem} 

For each data point $(\x, y)$,
we assume that
$\exists ~\w^* \in \R^r$ such that
$y = f_{\w^*} (\x)$.
% The oracle attacks a given data point $(\x, y)$
% with probability $\beta(\x)$.
The oracle decides whether to attack a given data point $(\x, y)$
by performing a Bernoulli trial $\alpha_{\x}\sim\mbox{Bernoulli}(\beta (\x))$
with probability of attack $\beta (\x) :=\mbox{Pr} (\alpha_{\x}=1)$.
% Subsequently, the oracle replaces
% the original output $y$ with
% $y_{\w^*} := f_{\w^*}(\x) + \xi_\x,$
% where $\vert \xi_\x \vert  \leq \theta$, for some
% $\w^* \in \R^r$ and $\theta > 0$.
% In summary, the adversarial action is
% ${\bm O}_{\bm A} (\x, y) = (\x, f_{\w^*}(\x) + a\xi_\x)$.
% If $a=1$, then the attacks output $y$,
% otherwise it takes no action.
% So,
% % the oracle attacks $y$ with probability
% $\beta (x) =\mbox{Pr} (a=1)$
% is the probability of attacking $y$.
If $\alpha_{\x}=1$,
then the oracle replaces the original output $y = f_{\w^*}(\x)$ with
$f_{\w^*}(\x) + \xi_\x,$
where $\vert \xi_\x \vert  \leq \theta$ for some fixed $\theta$.
% In summary, the adversarial action is
If $\alpha_{\x}=0$,
then the oracle returns the original output $y$ without perturbing it.
To ease notation,
the adversarial output is denoted by $v:=f_{\w^*}(\x) + \alpha_{\x}\xi_\x$.
In summary, the adversarial action is
${\bm O}_{\bm A} (\x, y) =
{\bm O}_{\bm A} (\x, f_{\w^*}(\x)) =
(\x, f_{\w^*}(\x) + \alpha_{\x}\xi_\x ) = (\x, v)$.
% \begin{equation*}
% {\bm O}_{\bm A} (\x, y) = \left\{
% \begin{array}{ll}
% f_{\w^*}(\x) & \mbox{if}~a=0, \\
% f_{\w^*}(\x) + \xi_\x & \mbox{if}~a=1.
% \end{array}
% \right.
% \end{equation*}
% \mynote{ 
The risk minimization problem is to find
$\argmin_{\w \in \mathcal{W}} 
\E_{\x \sim {\cal D}_{\rm \x}} 
[ \left ( f_{\w^*}(\x) - f_\w(\x) \right )^2]
$.
%}

% Towards being able to solve this risk minimization problem
% ~\\ \\
%We make two assumptions about the distribution ${\cal D}_{\rm x}$
%of the input $\x$.
%\begin{itemize}\label{assump2} 
The adversarial oracle designs the output distortion
$f_{\w^*}(\x) + \xi_\x$
under Assumptions \ref{assn:par_sym} and \ref{assn:fin_moments}
for the distribution ${\cal D}_{\rm x}$ of input $\x$.

\begin{assn}[Parity symmetry]
\label{assn:par_sym}
We assume that the distribution $\mathcal{D}_{{\rm \x}}$ of the input $\x$
is symmetric under the parity transformation, i.e if $\x$ is a random variable such that $\x \sim {\cal D}_{\rm \x}$ and $-\x \sim {\cal D}_{\rm \x}$.
\end{assn}

\begin{assn}[Finiteness of first four moments of input norm]
\label{assn:fin_moments}
We assume that the following expectations are finite,
\[ {\rm m}_i \coloneqq 
\E_{\x } \left[ \norm{\x}^i \right],~i =1,2,3,4, \] 
%\end{itemize} 
\end{assn}
where $\norm{\cdot}$ denotes the Euclidean norm thereafter. We note that for a measurable function $\beta : \R^n \rightarrow [0,1]$,
Assumption \ref{assn:fin_moments} implies finiteness of the expectations
\[ \beta_i \coloneqq 
\E_{\x } \left[ \beta(\x)\norm{\x}^{i} \right],~i =1,2,3,4. \]
% \begin{eqnarray*}
% &&\beta_1 \coloneqq \E_{\x } \Big [ \beta(\x)\norm{\x} \Big ],\beta_2 \coloneqq \E_{\x } \Big [ \beta(\x)\norm{\x}^2 \Big ]\\ 
% &&\beta_3 \coloneqq \E_{\x }  \Big [\beta(\x)\norm{\x}^3 \Big ],  \beta_4 \coloneqq \E_{\x } \Big [ \beta(\x)\norm{ \x}^4 \Big ]
% \end{eqnarray*} 
$\beta(\x)$ induces bias in the coin that the adversarial oracle tosses
to decide whether or not to attack the true
output $y$ associated with input $\x$. To ease exposition, we introduce the notation
\begin{gather*}
\bar{\A} := \frac{1}{k}\sum_{i=1}^k \A_i,
~\Sigma := \mathrm{E}\left[\x\x^\top\right],\\
\lambda_1 := \lambda_{\min}\left(\bar{\A}\Sigma \M^T\right),
~\lambda_2 := \sqrt{\lambda_{\max}\left(\M^T \M\right)},
~\lambda_3 := \frac{1}{k}\sum_{i=1}^k \lambda_{\max}\left( {\A_i\A_i^\top} \right),
\end{gather*}
where $\lambda_{\min}$ and $\lambda_{\max}$ denote the minimum and maximum eigenvalue,
respectively,
and $\M \in \R^{r \times n}$ is a ``sensing matrix''.

Algorithm \ref{tronalgo} summarizes
the proposed neural network training procedure
in the presence of the assumed adversarial oracle.
We refer to Algorithm \ref{tronalgo} as the Neuro-Tron.
Neuro-Tron is a stochastic optimization algorithm that does not use gradients,
and it is inspired by %works like
\citet{kakade2011efficient, klivans2017learning, goel2018learning}.

% \begin{algorithm}[tb]
% \caption{Example algorithm}
% \label{alg:algorithm}
% \begin{algorithmic}[1] %[1] enables line numbers
% \STATE \textbf{Input}: Your algorithm's input\\
% \textbf{Parameter}: Optional list of parameters\\
% \textbf{Output}: Your algorithm's output
% \STATE Let $t=0$.
% \WHILE{condition}
% \STATE Do some action.
% \IF {conditional}
% \STATE Perform task A.
% \ELSE
% \STATE Perform task B.
% \ENDIF
% \ENDWHILE
% \STATE \textbf{return} solution
% \end{algorithmic}
% \end{algorithm}

\begin{algorithm}[tb] 
\caption{Neuro-Tron (mini-batched, multi-gate, single-filter, stochastic algorithm)}
\label{tronalgo}
\begin{algorithmic}[1]
\STATE {\bf Input:} Sampling access to the marginal input distribution ${\cal D}_{\x}$ % on $\R^n$
\STATE {\bf Input:} Access to adversarial output $v\in\R$
for any input $\x\in\R^n$
% adversarially corrupted output $y \in \R$ when queried with $\x \in \R^n$
\STATE {\bf Input:} Access to output  $f_{\w} (\x)$ of any $f_\w \in  \mathcal{F}_{k,\alpha,\mathcal{A},{\cal W}}$
for any $\w\in \R^r$ and any input $\x$
\STATE {\bf Input:} A sensing matrix $\M \in \R^{r \times n}$
\STATE {\bf Input:} A starting point $\w_1$
\STATE {\bf Input:} Number $T$ of batches
\STATE {\bf Input:} Batch size $b$
\STATE {\bf Input:} Learning rate $\eta$
\item[]
\FOR{$t = 1,\ldots$,\,{\rm T}}
    \STATE Sample batch
    $s_t \coloneqq
    (\x_{t_1},\ldots,\x_{t_b})$,
    where $\x_{t_i}\sim {\cal D}_{\x},~i=1,\ldots,b$ %and query the (adversarial) oracle with it.
    \FOR{$i = 1,\ldots,b$}
    \STATE  The oracle samples
    $\alpha_{t_i} \in \{0,1\}$ with probability
    $\{ 1 -\beta(x_{t_i}), \beta(x_{t_i})\}$
   \STATE The oracle replies with
   $v_{t_i} := f_{\w^*}(\x_{t_i} )+\alpha_{t_i}\xi_{t_i}$
  \ENDFOR
  \STATE Form the so-called Tron-gradient, 
    \[\g^{(t)} := \M \left ( \frac{1}{b} \sum_{i=1}^b
    \left ( \Big (v_{t_i} - f_{\w^{(t)}}(\x_{t_i}) \Big) \x_{t_i} \right ) \right )\]
  \STATE  $\w^{(t+1)} = \w^{(t)} + \eta \g^{(t)}$
\ENDFOR  
\end{algorithmic}
\end{algorithm}

%\newpage 

\section{Probabilistic performance bounds}\label{sec:thm} 

This section states the main theorem (Theorem \ref{main2}),
which provides the optimal learning rate
for Algorithm \ref{tronalgo}
to approximate the neural network weights
at a given accuracy level $\epsilon$
with a given failure probability $\delta$.
To illustrate the consequences of Theorem \ref{main2},
the special case of normally distributed input data is discussed.

\subsection{The main theorem}

\begin{thm}[Trade-off between accuracy and failure]
\label{main2} % {\rm [\bf Proof in Appendix \ref{sec:proofthm1}]}
Let $f_{\w}$ be a neural network belonging to
the function class $\mathcal{F}_{k,\alpha,\mathcal{A}, {\cal W}}$ of Definition \ref{net}.
For each data point $(\x, y)$ of input $\x$ and output $y$,
assume that
$\exists ~\w^* \in \R^r$ such that
$y = f_{\w^*} (\x)$.
Consider an adversarial oracle acting as
${\bm O}_{\bm A} (\x, y) = (\x, f_{\w^*}(\x) + \xi_\x)$,
where $\vert \xi_\x \vert  \leq \theta$ for some fixed $\theta$. Additionally, assume that Assumptions \ref{assn:par_sym} and \ref{assn:fin_moments} are satisfied, and that $\lambda_1 >0$.
\begin{enumerate}
\item
% \textbf{Exactly realizable labels (i.e  $\theta = 0$ and hence no noise.)}
Assume that $\theta = 0$, which corresponds to the case of
uncorrupted adversarial action
${\bm O}_{\bm A} (\x, y) = (\x, f_{\w^*}(\x))$.
% Suppose the oracle returns faithful output i.e it sends $\y = f_{\w^*}(\x)$ when queried with $\x$. 
If the learning rate $\eta$ in Algorithm \ref{tronalgo} is equal to
\[
\eta = \frac{
\lambda_1}{\gamma(1+\alpha)\lambda^2_2\lambda_3(\mathrm{m}_4/b+\mathrm{m}_2^2(1-1/b))},
\]
where $\gamma > \max \left\{ C,1\right\}$ with $C:=\lambda_1^2 (\lambda_2^2\lambda_3 (\mathrm{m}_4/b+\mathrm{m}_2^2(1-1/b)))^{-1}$,
then for accuracy $\epsilon \geq 0$,
for failure probability $\delta \geq 0$,
and for ${\rm T} = {\large \mathcal{O}}\Big(\log\left(\frac{\norm{\w^{(1)}-\w^*}^2}{\epsilon^2\delta}\right)\Big)$
it holds that
$\norm{\w^{({\rm T})}-\w^*} \leq \epsilon$ with probability at least $1-\delta$.

\item
% \textbf{Realizable labels additively corrupted by bounded probabilistic adversarial perturbation.}
Assume that 
$\theta \in (0, \theta_*)$ for some $\theta_* >0$,
which corresponds to the case of adversarial perturbation via additive noise.
We assume that $\beta_1$ is such that the constant ${\rm c}_{\rm trade-off} := \frac{ (1+\alpha)\lambda_1}{\beta_1\lambda_2}-1 >0$ 
Moreover, assume that the distribution ${\cal D}_{\x}$, matrix $\M$ in Algorithm \ref{tronalgo}, noise bound $\theta_*$,
target accuracy $\epsilon$ and target confidence $\delta > 0$ are such that
\begin{align}\label{condmain1}
\theta_*^2 = \epsilon^2 \delta {\rm c}_{\rm trade-off},~
\epsilon^2\delta  <  \norm{\w^{(1)}-\w^*}^2 . %\cdot  \min \{ 1, \theta_*^2 \}
\end{align}
If the learning rate $\eta$ in Algorithm \ref{tronalgo} is equal to
\begin{equation*} 
\eta =  \frac{{\beta}_1 {\rm c}_{\rm trade-off}}{\gamma(1+\alpha)^2 \lambda_2 \lambda_3 \Big ( (\beta_1{\rm m}_2+{\rm m}_2^2)\left(1-\frac{1}{b}\right) +\frac{\beta_3+{\rm m}_4}{b} \Big )},
\end{equation*}
where
\begin{equation*}
\gamma > \max \left\{
\frac{\left(\beta_1 c_{\rm trade-off  }\right)^2}
{(1+\alpha)^2\lambda_3\left(\left(\beta_1{\rm m}_2+
{\rm m}_2^2\right)\left(1-\frac{1}{b}\right) +\frac{\beta_3+{\rm m}_4}{b}\right)},\,
C_2 \right\} > 1
\end{equation*}
with
\begin{equation*}
C_2:= \frac{\epsilon^2\delta + \frac{\theta^2\left((\beta_1^2+\beta_1{\rm m}_2)\left(1-\frac{1}{b}\right) +\frac{\beta_2+\beta_3}{b}\right)}{(1+\alpha)^2\lambda_3\left(\beta_1{\rm m}_2+{\rm m}_2^2\right)\left(1-\frac{1}{b}\right) +\frac{\beta_3+{\rm m}_4}{b}}}{\epsilon^2\delta - \frac{\theta^2}{{\rm c}_{\rm trade-off}}},
\end{equation*}
then for
\begin{equation*}
{\rm T}  = {\large \mathcal{O}} \left( \log \left[ \frac{\norm{\w^{(1)}-\w^*}^2}{\epsilon^2\delta - \frac{\theta^2}{{\rm c}_{\rm rate} }} \right]\right)
\end{equation*}
with
\begin{equation*}
{\rm c}_{\rm rate} := \frac{\gamma - 1}{\frac{\mathrm{m}_2+\mathrm{m}_3}{(1+\alpha)^2\lambda_3(\mathrm{m}_3 + \mathrm{m}_4)} + \frac{\gamma}{{\rm c}_{\rm trade-off}} }
\end{equation*}
it holds that
$\norm{\w^{({\rm T})}-\w^*} \leq \epsilon$ with probability at least $1-\delta$.
%\qed
\end{enumerate}
\end{thm}

% We note that it is easy to sample valid examples on which Algorithm \ref{tronalgo} can be invoked.\label{sample_neuro}. Using the set of inputs in ${\cal T}_\x = \{\x_i \in \R^n \mid i = 1,\ldots,S\}$ we have $\tilde{\Sigma} := \frac{1}{S}\sum_{i=1}^S \x_i \x_i^\top$ positive definite with high probability (due to law of large numbers) as long as the true covariance matrix of $\x_i$ is so. We can as well make the set ${\cal T}_\x$ symmetric by including $-\x_i$ for every vector $\x_i \in {\cal T}_\x$. This inclusion keeps the sample covariance matrix intact since $\E \left [ \x_i \x_i^\top \right ] =\E \left [ (-\x_i) (-\x_i)^\top \right ]$. 

\begin{rmrk}
\label{theorem1_remarks}
Some remarks on Theorem \ref{main2} follow.
\begin{enumerate}
\item Theorem \ref{main2} places weak conditions, which can be easily met in practice.
Firstly, it is easy to find a distribution ${\cal D}_{\x}$
that satisfies Assumptions \ref{assn:par_sym} and \ref{assn:fin_moments}
and that has a positive definite covariance matrix $\Sigma$.
Secondly, for any full rank $\M \in \R^{r \times n}$,
any matrix $\C \in \R^{r \times n}$
and any even width $w$ (say $w=2k$),
the sensing matrices in Definition \ref{net} can be set to
${\cal A} = \{ (\M - k\C, \M - (k-1)\C,...,\M - \C, \M+\C,...,\M + k\C \}$.
Then $\bar{A} = \M$ has full rank, so $\lambda_1 = \lambda_{\min}(\M \Sigma \M^\top) >0$
as required.
%\mynote{This above statement might have to change!} 
%the pair $(\tilde{\Sigma},\M)$ would be ${\cal N}_{\cal A}-$consistent as given in Definition \ref{net} 
%Thus one can invoke Algorithm \ref{tronalgo} on the training data $\{ (\x_i,y_i) \mid i = 1,\ldots, S \}$ with ${\cal N}_{\cal A}$ and this $\M$. 
Thirdly, it is easy to construct a sampling scheme
that generates a matrix $\M \in \R^{r \times n}$ with $1 \leq r \leq n$
which is full rank with high probability. To this end,
generate independent $\textbf{g}_i=(g_i^1,\dots,g_i^r)^{\top}\sim N(0,\I_{r \times r})$ and construct $G=\sum_{i=1}^k \textbf{g}_i \textbf{g}_i^\top$.
Then $G$ follows a Wishart distribution $\mathbb{W}(\I,k)$ with $k$ degrees of freedom.
Since $\I$ is invertible, $G$ has full rank with probability 1 as long as $k \geq r$.
$G$ can be used as a sub-matrix
to complete it as a matrix $\M \in \R^{r \times n}$ which also has full rank with probability $1$.
Lastly, % for ease of intuition
consider the case when $\beta(\x)$ is a constant $\beta$. Then we note that the condition of
$c_{\rm trade-off}$ being positive
is equivalent to $\beta < \frac{(1+\alpha)\lambda_{\min}(\bar{\A} \Sigma \M^\top)}{ \E [ \norm{\x}]\cdot \norm{\M}_2}$.
From here we can see that if we anticipate a large probability $\beta$ of attack,
we can scale the vectors in the support of distribution ${\cal D}_{\x}$
by an appropriate positive factor and enlarge the upper bound for $\beta$ by the same factor. 
\item The uniqueness of the global minimum for $\theta =0$ can be proven by contradiction.
Assume that there are two distinct minima
$\argmin_{\w \in \mathcal{W}} \E_{\x \sim {\cal D}_{\x}}
[ \left ( f_{\w^*}(\x) - f_\w(\x) \right )^2]$.
The application of Theorem \ref{main2} to each minimum separately
implies that Algorithm \ref{tronalgo} gets
arbitrarily close to each minimum, which is a contradiction.
\item In both cases
$\theta = 0$ and $\theta \in (0, \theta_*)$,
the learning rate $\eta$ is an increasing function of
the mini-batch size $b$.
So increasing $b$ increases the rate of convergence.
%{\bf (c)}  If $r \leq n$ and $\bar{\A}\Sigma$ is full rank i.e rank $r$ then in the above, we can always choose $\M = \bar{\A}\Sigma$.  Also if $\Sigma$ is positive definite, then $\M = \bar{\A}$ is also a valid choice. 
\item In the case of $\theta \in (0, \theta_*)$,
the term $\epsilon^2\delta - \frac{\theta^2}{c_{{\rm rate}}}$
in the expression of ${\rm T}$ is positive
because of the lower bound imposed on the parameter $\gamma$.
\item In Subsection \ref{opti:tron},
we show that the trade-off between optimization accuracy and failure probability
is near-optimal in the worst-case scenario, that is
when the adversary attacks every data point.
\end{enumerate}
\end{rmrk} 

%The above Theorem essentially gives a {\it worst-case} trade-off between $\epsilon$ (the accuracy ) and $\delta$ (the confidence) that can be achieved when training against a \textit{fixed} adversary additively corrupting the true output generated by the network by at most $\theta_*$. %{\bf \color{red}

\subsection{Sketch of the proof of the main theorem}
\label{sec:proofsketch}

The proof of Theorem \ref{main2} is given in
Appendices \ref{sec:proofthm1}, \ref{sec:proofthm1_2}, \ref{app:lem} and \ref{app:rec}.
An outline of the proof follows.
Initially,
the proof disentangles the dependencies between
random variable $\g^{(t)}$,
sampled data $s_t$ and
coin flips 
$a_t:=(\alpha_{t_1},\ldots,\alpha_{t_b})$
that determine whether or not to attack the corresponding output data.
% Continuing in the notation used in Algorithm \ref{tronalgo},
Let $s_{1:t} := (s_1,\ldots, s_t)$
be the training data sampled by Algorithm \ref{tronalgo}
till the $t$-th iteration.
% We overload the notation to also denote by $S_t$,
% the sigma-algebra generated by the samples seen {\it and the $\alpha$s} till the $t$-th iteration.
The neural network weights $\w_t$ at time $t$
are determined conditional on $s_{1:(t-1)}$.
The random variable $g_t$ is dependent on
$\s_t$, on $\alpha_{t}$, and on $(\xi_{t_1},\ldots,\xi_{t_b})$.
% We shall denote the collection of random variables $\{\alpha_{t_i} \mid i =1,\ldots,b\}$ as $\alpha_t$.
The key idea in the proof is to find a tight upper bound on the random variables
\begin{equation*}
\E_{s_t,\alpha_t} \left[ \norm{\w^{(t+1)} - \w^*}^2 - \norm{\w^{(t)} - \w^*}^2
\,|\, s_{1:(t-1)} \right].
\end{equation*}
To acquire such an upper bound,
we invoke
Assumption \ref{assn:par_sym}
and we track the combinatorial effect of mini-batching.
Finally, we take total expectations over the above upper bound
and reduce the problem of finding convergence times
to a problem of analyzing certain algebraic recursions.
These recursions are established in the lemmas of Appendix \ref{app:rec}.

%At the same time, imagine if we suspect the adversary to be strong leading to a large value of $\theta_*$ and $\beta_1$. Note that, the additional flexibility in choosing $q$ large can protect us from those scenario ensuring the same accuracy at a cost of possibly higher runtime.

\subsection{Performance bounds for normally distributed input data}

To understand the constraints imposed by Equation \eqref{condmain1}
of Theorem \ref{main2},
we assume normally distributed input data
and consider a single $\relu$ gate neural network
$\mathcal{F}_{1,0,\{ \mathbf{I}_{n\times n} \},\mathbb{R}^n}$.
Lemma \ref{lm1} provides the constant
${\rm c}_{\rm trade-off}$ under this setting.

\begin{lemma}\label{lm1}
[Accuracy-failure trade-off for normally distributed input]
% [Provable training for Gaussian distributions and single $\relu$ gate with an adversarial oracle].
Consider a single $\relu$ gate neural network
$\mathcal{F}_{1,0,\{ \mathbf{I}_{n\times n} \},\mathbb{R}^n}$.
Assume that the input data $\x$ are normally distributed according to
$\mathcal{D}_{\x} = \mathcal{N}(0,\sigma^2\mathbf{I}_{n\times n})$.
If $\beta(\x)=:\beta\in (0, 1)$ for all $\x$ and
$\M=\mathbf{I}_{n \times n}$ in Algorithm~\ref{tronalgo},
then the constant $c_{\rm trade-off}$ in Theorem~\ref{main2} is given by
\begin{equation}\label{gauss}
    {\rm c}_{\rm trade-off}=
    \frac{\theta_*^2}{\epsilon^2\delta} = \frac{\sigma}{\sqrt{2}\beta} \frac{\Gamma\left(\frac{n}{2}\right)}{\Gamma\left(\frac{n+1}{2}\right)} -1.
\end{equation}
\end{lemma}

\noindent The proof of Lemma~\ref{lm1} can be found in Appendix \ref{lem:gauss}. If the input data distribution is
${\cal N}(0,\sigma^2 {\bf I})$,
where $\sigma^2$ is an increasing function of the input data dimension $n$
such that the right-hand side of Equation \eqref{gauss} remaines fixed,
then Equation~\eqref{gauss} provides a sufficient condition to
defend against an adversary with a fixed corruption budget of $\theta_*$,
with a desired accuracy of $\epsilon$ and with failure probability of $\delta$.

% Hence we conclude that if we want to defend against an adversary with a fixed corruption budget of $\theta_*$ with a desired accuracy of $\epsilon$ and failure probability of $\delta$ then a sufficient condition (a \textit{safe} data distribution) is if the  data distribution is ${\cal N}(0,\sigma^2 {\bf I})$ with $\sigma^2$ being an increasing function of the data dimension $n$ in an appropriate way such that the R.H.S. of equation \ref{gauss} remains fixed. 

\subsection{Understanding the prediction risk}\label{sec:risk}

% For a single $\relu$ gate neural network
% $\mathcal{F}_{1,0,\{ \mathbf{I}_{n\times n} \},\mathbb{R}^n}$,
% Recalling that the predictor we are training is $f_{\w} : \R^n \ni \x \mapsto  \frac{1}{k} \sum_{i=1}^k \sigma \Big ( \w^\top \A_i\x \Big )  \in \R$,
The prediction risk of a neural network
$\mathcal{F}_{k,\alpha,\mathcal{A},{\cal W}}$
at time $\rm T$ is
\begin{equation*}
\E_{\x \sim {\cal D}_{\x}}
\left[ \left ( f_{\w^*}(\x) - f_{\w^{(\rm T)}}(\x) \right )^2\right]
=\E_{\x \sim {\cal D}_{\rm \x}}
\left[ \left ( \frac{1}{k} \sum_{i=1}^k
\left\{ \sigma \left( \w^{*\top} \A_i\x \right) -
\sigma \left( \w^{(\rm T)\top} \A_i\x \right) \right\} \right )^2\right] .
\end{equation*} 
As shown in Lemma \ref{lem:diff_f_sq} (Appendix \ref{app:lem}),
if the conditions of Equation \eqref{condmain1} of Theorem \ref{main2} are satisfied
and if the upper bound
\begin{equation}
\label{ineq:bound}
% & \E_{\x \sim {\cal D}_{\x}}
% \bigg[  (1+\alpha)^2 \left( \frac{1}{k}\sum_{i=1}^k \lambda_{\max}(\A_i\A_i^\top) \|\x\|^2  \right ) \bigg]\cdot % \frac{1}{\delta \cdot {\rm c}_{\rm trade-off}}\\
\left ( \frac{(1+\alpha)^2}{k \delta %\cdot
{\rm c}_{\rm trade-off}}%\cdot
\sum_{i=1}^k \lambda_{\max}(\A_i\A_i^\top) \right ) %\cdot
\E_{\x \sim {\cal D}_{\x}}
\left[ \|\x\|^2 \right]<1
\end{equation}
holds at iteration $\rm T$,
then the risk
$\E_{\x \sim {\cal D}_{\x}}
[ \left ( f_{\w^*}(\x) - f_{\w^{(\rm T)}}(\x) \right )^2 ]$
is bounded above by $\theta_*^2$.

% Now we can ask as to whether the above can be below $\theta_*^2$  when Theorem \ref{main2}  (Case II) guarantees,

% \[ \norm{\w^{(\rm T)}-\w^*}^2  \leq \epsilon^2 = \frac{\theta_*^2}{ \delta \cdot {\rm c}_{\rm trade-off}} \] 
% where we recall that we have, ${\rm c}_{\rm trade-off} = (-1 + \frac{ (1+\alpha)\lambda_1}{\beta_1\lambda_2})$. From Lemma \ref{lem:diff_f_sq}, Appendix \ref{app:lem}, we note that this will be true at iteration $\rm T$ if we can ensure,

% %\mynote{Appendix reference needs to be fixed!} 

% \begin{align*}
% &\E_{\x \sim {\cal D}_{\rm in}}  \bigg[  (1+\alpha)^2 \left( \frac{1}{k}\sum_{i=1}^k \lambda_{\max}(\A_i\A_i^\top) \|\x\|^2  \right ) \bigg]\cdot \frac{1}{\delta \cdot {\rm c}_{\rm trade-off}}\\
% &= \left ( \frac{(1+\alpha)^2}{k \delta \cdot {\rm c}_{\rm trade-off}}\cdot \sum_{i=1}^k \lambda_{\max}(\A_i\A_i^\top) \right ) \cdot \E_{\x \sim {\cal D}_{\rm in}}  \bigg[ \|\x\|^2 \bigg ]<1.
% \end{align*}

It is easy to demonstrate cases for which Inequality~\eqref{ineq:bound} holds.
% For example, assume that the 
For example, the assumptions of Lemma \ref{lm1}
% consider the situation when one is trying to train a single $\relu$ gate and $\mathcal{D}_{\rm in} = \mathcal{N}(0,\mathbf{I}_{n\times n})$ and we choose $\M = \mathbf{I}_{n\times n}$. This corresponds to $\lambda_1 = 1$, $\lambda_2=1$ and $\alpha = 0$. Further suppose that the attack probability is $ \beta(\x)=\beta$ for all $\x$ for some $\beta \in (0,1)$. 
imply that Inequality~\eqref{ineq:bound} is equivalent to
% \mynote{ 
% \[ \lambda_1 := \lambda_{\min}(\bar{\A}\Sigma \M^T), \quad \lambda_2 := \sqrt{\lambda_{\max}(\M^T \M)}, \quad 
% \lambda_3 := \frac{1}{k}\sum_{i=1}^k \lambda_{\max}( {\A_i\A_i^\top} ).\]
% }
%and $\mathcal{H} = \mathcal{F}_{1,0,\{ \mathbf{I}_{n\times n} \}, \R^n}$. If we choose $\A_1=\mathbf{I}_{n \times n}$ then $\lambda_1 = 1$, $\lambda_2=1$ and $\alpha = 0$. For the special case of $ \beta(\x)=\beta$ for all $\x$ where $0<\beta<1$, the above display reduces to
\begin{equation}
\label{eq:b1_bound}
n=\E_{\x \sim {\cal D}_{\x}}  [\|x\|^2] \leq \delta\left(\frac{1}{\beta_1} -1\right)
\text{ or } \beta_1 \leq \frac{1}{1+n/\delta}
\end{equation}
in the case of a single $\relu$ gate neural network
$\mathcal{F}_{1,0,\{ \mathbf{I}_{n\times n} \},\mathbb{R}^n}$
trained on normally distributed input data.
Equation~\eqref{eq:beta1} and Inequality~\eqref{eq:b1_bound} yield the upper bound
\begin{equation}
\label{eq:b_bound}
\beta<\frac{1}{\sqrt{2}}
\left[\frac{\Gamma\left(\frac{n}{2}\right)}{\Gamma\left(\frac{n+1}{2}\right)}\right]\frac{1}{1+n/\delta}
\end{equation}
for the probability $\beta$ of adversarial attack.
Note that this bound on $\beta$ depends on the input data dimension $n$.
So, if the probability $\beta$ of attack admits the upper bound of Inequality~\eqref{eq:b_bound}
while training a single $\relu$ gate neural network
on normally distributed input data,
then the learnt weights attain higher average prediction accuracy
than the worst distortion the oracle could have made to any particular output data point.

% The above equation thus tells us that for learning a $\relu$ gate with normally distributed data, there is a dimension dependent upperbound on the attacker's probability such that {\it post-training} the learnt weights will have better prediction accuracy on an average than the worst distortion the attacker could have made to any particular label.

%makes sense in the way that if attack probability is small then the maximum risk can go below the amount of adversarial attack. 

\subsection{Demonstrating near-optimality in the worst case
% , of the guarantees of Theorem \ref{main2}
}\label{opti:tron}

We recall that Case 1 of Theorem \ref{main2} ($\theta = 0$)
shows that Algorithm \ref{tronalgo} recovers the true filter $\w^*\in\R^r$ when it has access to the uncorrupted data. 
Consider a filter value $\w_{{\x}} \neq \w^*$
given the true filter $\w^*$,
and suppose that $\theta_* = \zeta$ for some $\zeta  \geq \sup_{\x \in \text{supp}(\mathcal{D}_{\x})} \vert f_{\w_{\rm adv}}(x) - f_{\w^*}(x) \vert$,
where $\text{supp} (\mathcal{D}_{\x})$ denotes the support of $\mathcal{D}_{\x}$.
Assume that ${\cal D}_{\x}$ is compactly supported,
so that the supremum exits.
In this setting,
Equation~\eqref{condmain1} yields
% $\epsilon^2 = \frac{\theta_*^2}{\delta c_{\rm trade-off}} \implies 
$\epsilon^2 \geq \zeta^2 / c_{\rm trade-off}$. Hence proving optimality of the guarantee
is equivalent to showing the existence of an attack which satisfies the
upper bound $\zeta$ of
$\sup_{\x \in \text{supp}(\mathcal{D}_{\x})}\vert f_{\w_{\rm adv}}(x) - f_{\w^*}(x) \vert$
and
for which the best possible accuracy nearly saturates the lower bound
$\zeta^2 / c_{\rm trade-off}$ of $\epsilon^2$.

If the adversarial oracle ${\bf O_{\A}}$ is queried at $\x$
under this choice of $\theta_*$, then
the oracle replies with
$\xi_\x + f_{\w_*}(\x)$,
where $\xi_\x = f_{\w_{\rm adv}}(\x) - f_{\w_*}(\x)$.
So, the data Algorithm \ref{tronalgo} receives
are exactly realized with filter $\w_{\rm adv}$.
Thus, Case 1 of Theorem \ref{main2} implies that
Algorithm \ref{tronalgo}
converges to $\w_{\rm adv}$
with high probability
and with error
$\norm{\w_{\rm adv} - \w_*} \le \epsilon$. 

We now consider the above attack happening 
to a single $\relu$ gate neural network
$f_{\w_*} (\x) = \relu(\w_*^\top \x),~\x\in\R^n,$
with $\zeta = r\norm{\w_{\rm adv} - \w_*}$,
where $r: = \sup_{\x \in {\rm supp} ({\cal D}_{\x})} \norm{\x}$.
Assume that $r$ is finite and that
${\cal D}_{\x}$ satisfies Assumptions
~\ref{assn:par_sym} and~\ref{assn:fin_moments}.
This choice of $\zeta$ is valid since the following holds, 
\begin{equation*} 
% \sup_{\x \in \text{supp}(\mathcal{D}_{\x})} \vert f_{\w_{\rm adv}}(x) - f_{\w^*}(x) \vert =
\sup_{\x \in \text{supp}(\mathcal{D}_{\x})} \vert \relu(\w_{\rm adv}^\top \x) - \relu(\w_*^\top \x) \vert 
\leq r\norm{\w_{\rm adv} - \w_*} = \zeta .
\end{equation*} 
Such a setup for training a single $\relu$ gate neural network
on output data additively corrupted by at most $\zeta = r\norm{\w_{\rm adv} - \w_*}$
demonstrates a worst case scenario (i.e $\beta(\x) =1$) in which
the accuracy guarantee of $\epsilon^2 \geq \zeta^2 / c_{\rm trade-off}$
is optimal up to a constant $r^2/c_{\rm trade-off}$.
The near-optimality of
Equation~\eqref{condmain1}
holds for any algorithm defending against this attack,
if the algorithm has the property of recovering the parameters correctly
when the output data are exactly realizable.  

\subsection{Defense against data-poisoning attacks via upscaled outer layer weights}\label{ssc:highq}

Definition~\ref{net2} generalizes the class of neural networks of Definition \ref{net} by introducing a weighted sum of gates computed by a neural network in the class.
The weights $q\in{\cal W}_2\subseteq\R^k$ of the second layer
play the role of weights in the sum of gates and
augment the network parameter space ${\cal W}_1$
of Definition~\ref{net} by ${\cal W}_2$.

~\\
\begin{dfn}[Weighted neural networks of depth $2$ and width $k$]
\label{net2}
Given $k$ sensing matrices $\mathcal{A} = \{\A_i \in \R^{r\times n} \mid i = 1,\ldots,k \}$, an $\alpha$-leaky $\relu$ activation mapping $\sigma(y) =  y{\bf 1}_{y \geq 0} + \alpha y{\bf 1}_{y <0}$, a filter space ${\cal W}_1 \subseteq \R^r$ and a space of values for the second layer weights ${\cal W}_2 \subseteq \R^k$, we define the function class $\mathcal{F}_{k,\alpha,\mathcal{A}, {\cal W}_{1,2}}$ as
\begin{equation*} 
 \mathcal{F}_{k,\alpha,\mathcal{A},{\cal W}_{1,2}}  :=
 \left\{f_{\q,\w} : \R^n \rightarrow \R,~
 f_{\q,\w}(\x)=\frac{1}{k} \sum_{i=1}^k q_i \cdot \sigma \Big ( \w^\top \A_i\x \Big )  \in \R ~|~ \w \in \mathcal{W}_1, ~\q \in {\cal W}_2 \right\} .
\end{equation*}
\end{dfn}

The analysis in the proof of
Theorem \ref{main2}
is applicable when $f_{\w}$ is replaced by $f_{\q,\w}$ for fixed $\q$.
Consequently, Theorem \ref{main2} continues to hold
for $\bar{\A}$ and $\lambda_3$ set to
\begin{gather}
\label{abar_qi}
\bar{\A} := \frac{1}{k}\sum_{i=1}^k q_i \A_i,
~\lambda_3 := \frac{1}{k}\sum_{i=1}^k q_i^2  \lambda_{\max}\left( {\A_i\A_i^\top} \right).
\end{gather}
Equation~\eqref{abar_qi} sets the constraint of positive
$\lambda_1 = \lambda_{\min}\left(\bar{\A}\Sigma \M^T\right)$
on $\M$.

Here we consider a special case of a neural network with a weighted sum of gates,
satisfying Definition~\ref{net2}.
By analyzing this special case we shall reveal an interesting insight about how weights in the outer layer of the network can help to defend against the attack being considered. 
%\mynote{this above sentence needs changing}
Consider neural networks $f_{\bm{1},\w}$ in
$\mathcal{F}_{k,\alpha,\mathcal{A},{\cal W}_{1,2}}$
with sensing matrices $\A_i,~ i= 1,\ldots, k,$
for the first layer of the network and note that
$f_{\bm{1},\w}=f_{\w}
\in\mathcal{F}_{k,\alpha,\mathcal{A},{\cal W}}.$
% Consider being given a value of $\beta_1$ and $\theta_*$ for the adversary.
Set $\M$ such that
$\lambda_1 = \lambda_{\min}\left(\bar{\A}\Sigma \M^T\right) >0,$
where $\bar{\A} = \frac{1}{k} \sum_{i=1}^k \A_i$.
Further, given a real number $q\ne 0,$
consider another class of neural networks $f'_{q^2\bm{1},\w}$ in
$\mathcal{F}_{k,\alpha,\mathcal{A},{\cal W}_{1,2}}$.
Note that 
$\lambda_1' \coloneqq \lambda_{\min}(\bar{\A}'\Sigma \M^T) = q^2 \lambda_1 > 0,$
% \mynote{Changing M unnecessarily complicates the argument!}
where $\bar{A}' \coloneqq \frac{q^2}{k} \sum_{i=1}^k \A_i$.
Thus, $\M$ ensures convergence of Algorithm \ref{tronalgo}
while training over both network classes.
Assume that the constants $\beta_1$ and $\theta_*$,
which characterize the adversarial attack, and
the `lack of confidence' $\delta$ are fixed.
If $\epsilon$ and $\epsilon'$
are the guaranteed accuracies of recovering the true weights
when $\q=\bm{1}$ and $\q=q^2\bm{1}$, respectively,
the it follows from Equation~\eqref{condmain1} that
\begin{equation*}
\epsilon =\theta_*\sqrt{\frac{1}{\delta (\frac{(1+\alpha)\lambda_1}{\beta_1\lambda_2}-1)}},~
\epsilon' =\theta_*\sqrt{\frac{1}{\delta (q^2 \cdot \frac{(1+\alpha)\lambda_1}{\beta_1\lambda_2}-1)}}.
\end{equation*}

For this special case, we note that a multiplicative increase in $\beta_1$, by say $c_1>1,$ can be compensated by letting the attack happen while training over neural networks $f_{c_1\bm{1},\w}$. Similarly, a multiplicative increase in $\theta_*$, by say $c_2>1$, can be compensated by letting the attack
happen while training over neural networks $f_{c_2\bm{1},\w}$, since 
$$c_2\theta_*\sqrt{\frac{1}{\delta (c_2^2 \cdot \frac{(1+\alpha)\lambda_1}{\beta_1\lambda_2}-1)}}<\theta_*\sqrt{\frac{1}{\delta ( \frac{(1+\alpha)\lambda_1}{\beta_1\lambda_2}-1)}}. $$

If Algorithm \ref{tronalgo} 
is used for training over neural networks
$f_{\bm{1},\w}$ and
$f'_{q^2\bm{1},\w},~q>1,$
with the same $\{\A_i\}_{i=1}^k$ matrices,
then one can choose a common $\M$ for both the instances such that
while facing the same output-poisoning adversary,
the accuracy of recovering the true weights in class
$f'_{q^2\bm{1},\w}$
%with outer layer weights
%$q_i=q^2,~q >1$
is higher than the accuracy of recovering the true weights in class
$f_{\bm{1},\w}.$
% with outer layer weights
% $q_i=1$.
In other words, increasing the outer layer weights
via higher values of $q>1$
improves the accuracy-related defence of Algorithm ~\ref{tronalgo}
against the same type of adversarial attack. 
To this end,
we demonstrate via a simulation-based experiment
the accuracy advantage gained by
upscaling the outer layer weights
(see Appendix \ref{ssc:highqsim}).

% \mynote{ The above sentence can be written more precisely : if 2 instances of stochastic NeuroTron - using the same M - are run on 2 net classes having the same A-matrices and faced with the same label poisoning adversary - then the performance is a lot better on the class where the outer weights are all-$q^2$ for $q >1$ than when q=1. 

% Said differently, among net classes with the same A-matrices, those with all $q^2 (q!=0)$ outer weights are better defended by NeuroTron for larger q .}

\section{Simulation study}\label{sec:sim} 
% {
% \url{https://colab.research.google.com/drive/1dgeYkfEFqzrMnAruaWD9qaLj9RDusGjd?usp=sharing}
% }

% Why simulation-based study? Two-fold objective
% 1) Empirically validate relative theoretical performance
% 2) Compare with SGD

% Try four different input data distributions
% Mention all four
% Explain how two have high variance and one infinite kurtosis
% Different types of hyperparameters being tuned

% Keep paragraph with description of simulations for Tron
% Mention that similarly 

% Mention that in the experiments
% either theta or beta are allowed to vary (two setups)
% That happens for each of four input data distributions
% Thus a total of 2*4 = 8 experiments are run
% Details of the hyperparameter setup can be found in Appendix

In this section,
we conduct a simulation study by training $\relu$ neural networks
on different input data distributions and
for different hyperparameter settings of Algorithm \ref{tronalgo}.
The purpose of the simulation study is twofold,
to empirically validate
the relative theoretical performance bounds of Algorithm \ref{tronalgo}
and to compare Algorithm \ref{tronalgo} with SGD.
The code for our simulations can be found at
\url{https://github.com/papamarkou/neurotron_experiments}.

Eight setups are included in our simulation study.
For each setup, independent and identically distributed input data samples $\x_{t_i},~i=1,\ldots,b,$
are drawn from one of the following four distributions:
standard normal $\mathcal{N}(\mu=0, \sigma^2=1)$,
normal $\mathcal{N}(\mu=0, \sigma^2=9)$ with mean $\mu=0$ and variance $\sigma^2=9$,
$\mbox{Laplace}(\mu=0, b=2)$ with mean $\mu=0$ and scale $b=2$,
or Student's t-distribution $\mbox{t}(\nu=4)$ with $\nu=4$ degrees of freedom.
For each setup associated with an input data distribution,
either the noise bound $\theta_{*}$
or the probability $\beta$ of adversarial attack vary,
while the remaining hyperparameters are fixed.
To sum up, in each of the eights setups,
one out of four possible distributions is selected to sample input data,
and one of the two hyperparameters $\theta_{*}$ or $\beta$ vary.

% we train $\relu$ neural networks
% on normally distributed input data for
% different hyperparameter settings of Algorithm \ref{tronalgo}. Three simulation-based experimental setups are presented,
% letting exactly one hyperparameter vary per setup.
% More specifically,
% either the noise bound $\theta_{*}$
% or filter dimension $r$
% or probability $\beta$ of adversarial attack vary.

% In the experiments we sample the neural net to be used using the sampling strategy
% as explained in {\bf Remark 1(a)}.

To run a simulation for a given setup,
we initially sample a point $\w_* \sim {\cal N}(0,\mathbf{\I}_r)$ and
sample the sensing matrices
as explained in Remark~\ref{theorem1_remarks}-1.
We then train our $\relu$ network via Algorithm \ref{tronalgo} to approximate $\w_*$,
starting from the weight initialization $\w_1={\bf 1}\in\R^r$
at the first iteration.
At the $t$-th iteration of Algorithm \ref{tronalgo},
we draw iid input data samples $\x_{t_i},~i=1,\ldots,b,$
from a distribution fixed throughout the run,
selected among the four aforementioned possible distributions.
% $\mathcal{N}(0, 1)$,
% $\mathcal{N}(\mu=0, \sigma^2=9)$,
% $\mbox{Laplace}(\mu=0, b=2)$ or
% $\mbox{t}(\nu=4)$.
Given input data point $\x_{t_i}$,
we instantiate a data-poisoning attack
without explicitly checking for consistency with the assumptions of Theorem \ref{main2};
we sample $\alpha_{t_i}$
from $\mbox{Bernoulli}(\beta (\x_{t_i}))$. Thus the probability of attack is $\beta (\x_{t_i}) =\mbox{Pr} (\alpha_{\x_{t_i}}=1),$
and if $\alpha_{t_i}=1$, we set the additive distortion as,
%\begin{equation*}
$
\xi_{t_i} = \theta_*\ind{i\mymod{2} = 0} - \theta_*\ind{i \mymod{2} \neq 0},
$
%\end{equation*}
where $\ind{}$ denotes the indicator function.
We run SGD similarly to Algorithm \ref{tronalgo}.
% except that no sampling of sensing matrices is involved.

\begin{table}[t!]
\centering
\begin{tabular}{|l|r|r|r|r|r|r|r|r|} 
\hline
% Hyperparameter & Value or set of values \\ [0.5ex] 
% \hline\hline
Data &
\multicolumn{2}{|c|}{$\mathcal{N}(0, 1)$} &
\multicolumn{2}{c|}{$\mbox{t}(4)$} &
\multicolumn{2}{c}{$\mathcal{N}(0, 9)$} &
\multicolumn{2}{|c|}{$\mbox{Laplace}(0, 2)$} \\ \hline
% Varying &
% \multicolumn{1}{|c|}{$\theta_{*}$} &
% \multicolumn{1}{c|}{$\beta$} &
% \multicolumn{1}{c|}{$\theta_{*}$} &
% \multicolumn{1}{c|}{$\beta$} &
% \multicolumn{1}{c|}{$\theta_{*}$} &
% \multicolumn{1}{c|}{$\beta$} &
% \multicolumn{1}{c|}{$\theta_{*}$} &
% \multicolumn{1}{c|}{$\beta$} \\ \hline
$\theta_*$ &
Varying & 0.25 &
Varying & 0.25 &
Varying & 0.25 &
Varying & 0.25 \\ \hline
$\beta$ &
0.5 & Varying &
0.5 & Varying &
0.5 & Varying &
0.5 & Varying \\ \hline
%  & 1 & 2 & 3 & 4 & 5 & 6 & 7 & 8 \\ \hline
$\eta$ &
0.0001 & 0.0001 & 0.0001 & 0.0001 &
0.00005 & 0.00005 & 0.00005 & 0.00005 \\ \hline
n & 100 & 100 & 100 & 100 & 50 & 50 & 50 & 50 \\ \hline
r & 25 & 25 & 25 & 25 & 25 & 25 & 25 & 25 \\ \hline
b & 16 & 16 & 16 & 16 & 16 & 16 & 16 & 16 \\ \hline
k & 10 & 10 & 10 & 10 & 10 & 10 & 10 & 10 \\ \hline
\end{tabular}
\caption{
Configuration of hyperparameters (rows) across eight experimental setups (columns).
Each setup arises from a combination of an input data distribution
and of a varying hyperparmater.
The varying hyperparameter is either the noise bound
$\theta_*\in\{0, 0.125, 0.25, 0.5, 1, 2, 4\}$ or
the probability of adversarial attack
$\beta\in\{0.005, 0.05, 0.1, 0.2, 0.5, 0.9\}$.
The rest of hyperparameters are the following:
learning rate $\eta$,
input data dimension $n$,
filter size $r$,
batch size $b$, and
network width $k$.}
\label{table:hps}
\end{table}

% At the beginning of training
% with Algorithm \ref{tronalgo} or with SGD,
% the weights are initialized to $\w_1={\bf 1}\in\R^r$.
Table \ref{table:hps} summarizes
the configuration of hyperparameters
across the eight simulation setups.
When the noise bound $\theta_*$ varies,
it takes values in $\{0, 0.125, 0.25, 0.5, 1, 2, 4\}$
and the probability $\beta$ of adversarial attack is fixed to $0.5$.
When $\beta$ varies,
it takes values in $\{0.005, 0.05, 0.1, 0.2, 0.5, 0.9\}$
and $\theta_*$ is fixed to $0.25$.
Based on empirical tuning,
the learning rate is set to $\eta = 0.0001$
when the input data distribution is
$\mathcal{N}(\mu=0, \sigma^2=1)$ or $\mbox{t}(\nu=4)$,
and to  $\eta = 0.00005$
when the input data distribution is
$\mathcal{N}(\mu=0, \sigma^2=9)$ or $\mbox{Laplace}(\mu=0, b=2)$.
The input data dimension is set to $n=100$
for data sampled from $\mathcal{N}(\mu=0, \sigma^2=1)$ or $\mbox{t}(\nu=4)$,
whereas it is set to $n=50$
for data sampled from $\mathcal{N}(\mu=0, \sigma^2=9)$ or $\mbox{Laplace}(\mu=0, b=2)$;
smaller dimension $n$ is used in the latter case
due to higher variance in the input data,
which can affect the numerical stability of Algorithm \ref{tronalgo} and of SGD.
The filter size, batch size and network width are set to
$r=25$, $b=16$ and $k=10$ across all setups.

\begin{figure}[t!]
\centering
\begin{subfigure}{.5\textwidth}
  \centering
  \includegraphics[width=1.\linewidth]{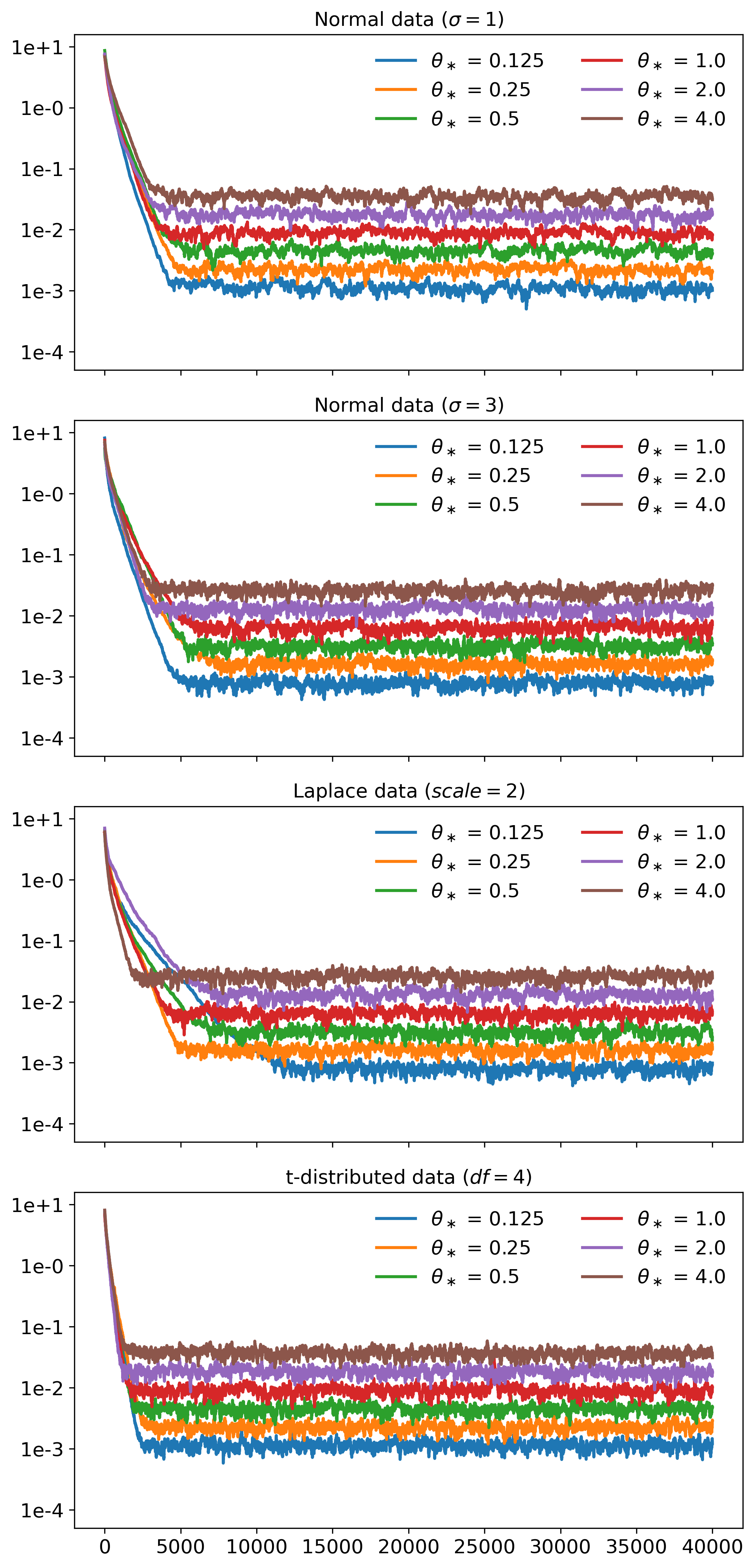}
  \caption{Different $\theta_{\star}$ values.}
  \label{fig:tron_varying_theta}
\end{subfigure}%
\begin{subfigure}{.5\textwidth}
  \centering
  \includegraphics[width=1.\linewidth]{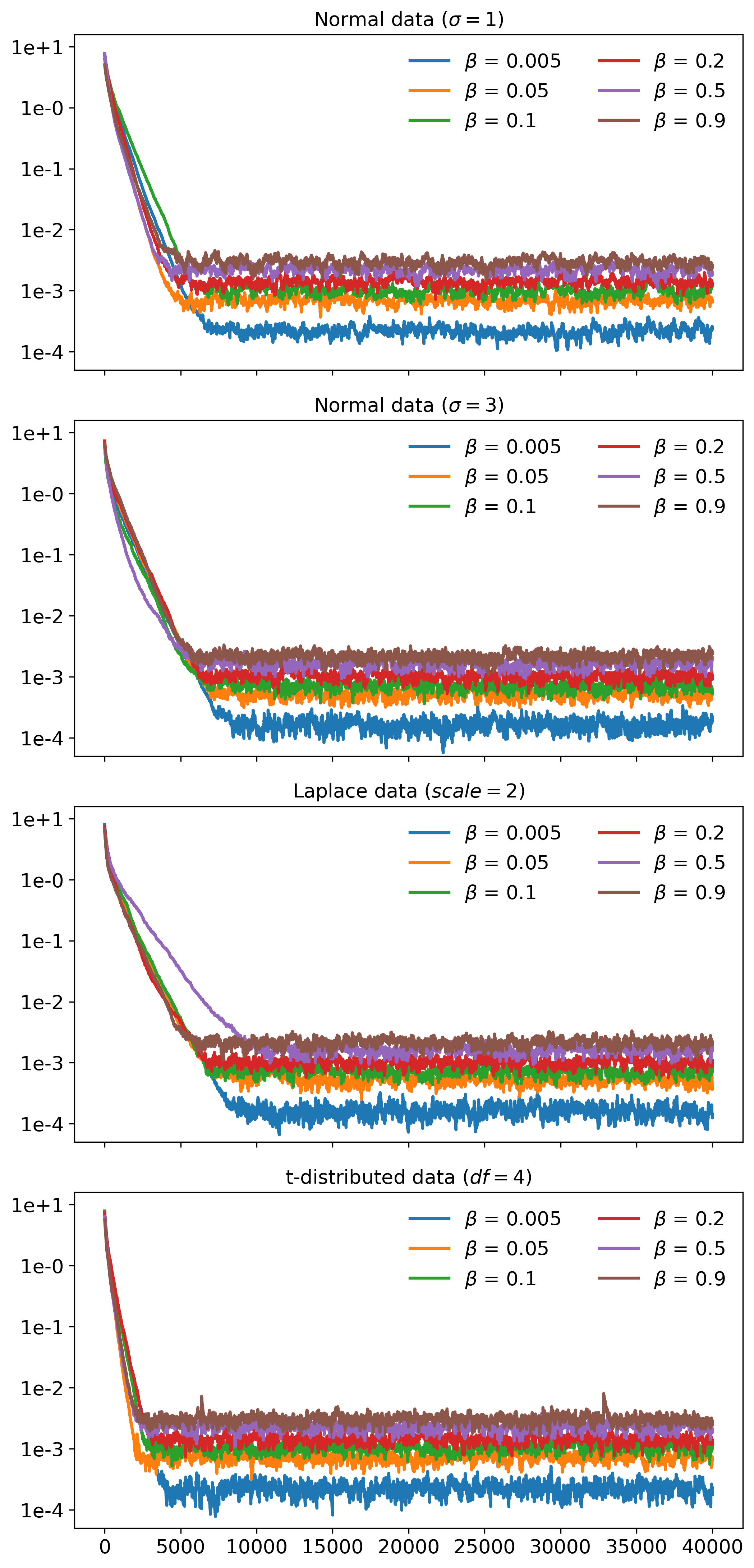}
  \caption{Different $\beta$ values.}
  \label{fig:tron_varying_beta}
\end{subfigure}
\caption{Simulation-based validation
of Theorem \ref{main2} regarding the performance
of Algorithm \ref{tronalgo} (Neuro-Tron).
(a): Neuro-Tron parameter recovery errors
per input data distribution
for different adversarial noise bounds $\theta_{\star}$.
(b): Neuro-Tron parameter recovery errors
per input data distribution
for different probabilities $\beta$ of adversarial attack.
}
\label{fig:tron_varying_theta_and_beta}
\end{figure}

Performance is measured in terms of the parameter recovery error
$\norm{\w_t - \w_{\star}}$
at iteration $t$ of Algorithm \ref{tronalgo} and of SGD,
where $\norm{\cdot}$ denotes the Euclidean norm.
In all figures of this section and of appendices
\ref{app:tron_vs_sgd}, \ref{app:tron_no_attack} and \ref{ssc:highqsim},
the vertical and horizontal axes display
recovery errors and iterations, respectively.
Recovery error tick mark labels are shown in $\log_{10}$ scale,
while the corresponding tick marks are shown in the original scale.

\begin{figure}[t!]
\centering
\begin{subfigure}{.5\textwidth}
  \centering
  \includegraphics[width=1.\linewidth]{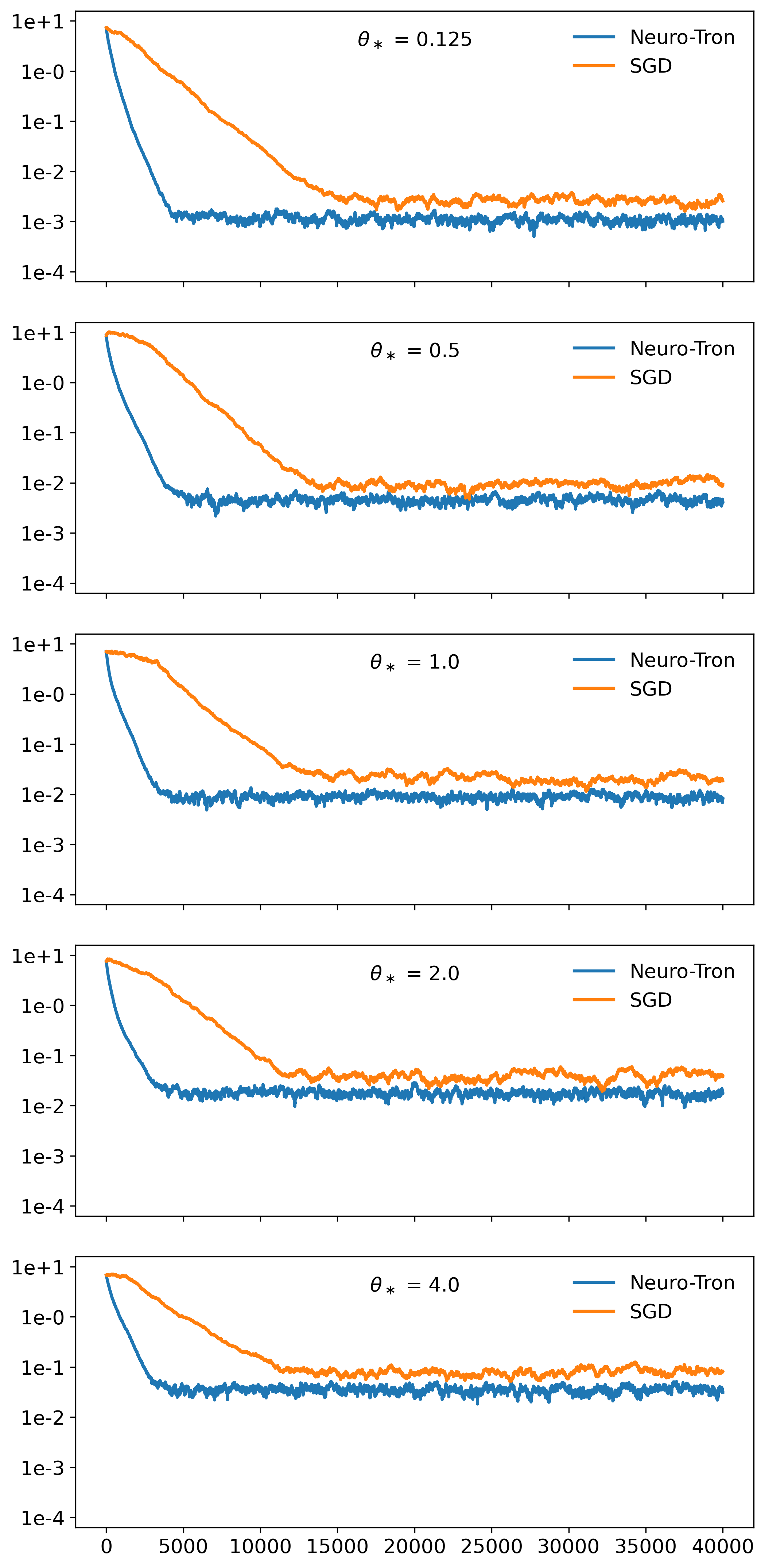}
  \caption{Neuro-Tron versus SGD per $\theta_{\star}$ value.}
  \label{fig:tron_vs_sgd_theta_normal_small_var}
\end{subfigure}%
\begin{subfigure}{.5\textwidth}
  \centering
  \includegraphics[width=1.\linewidth]{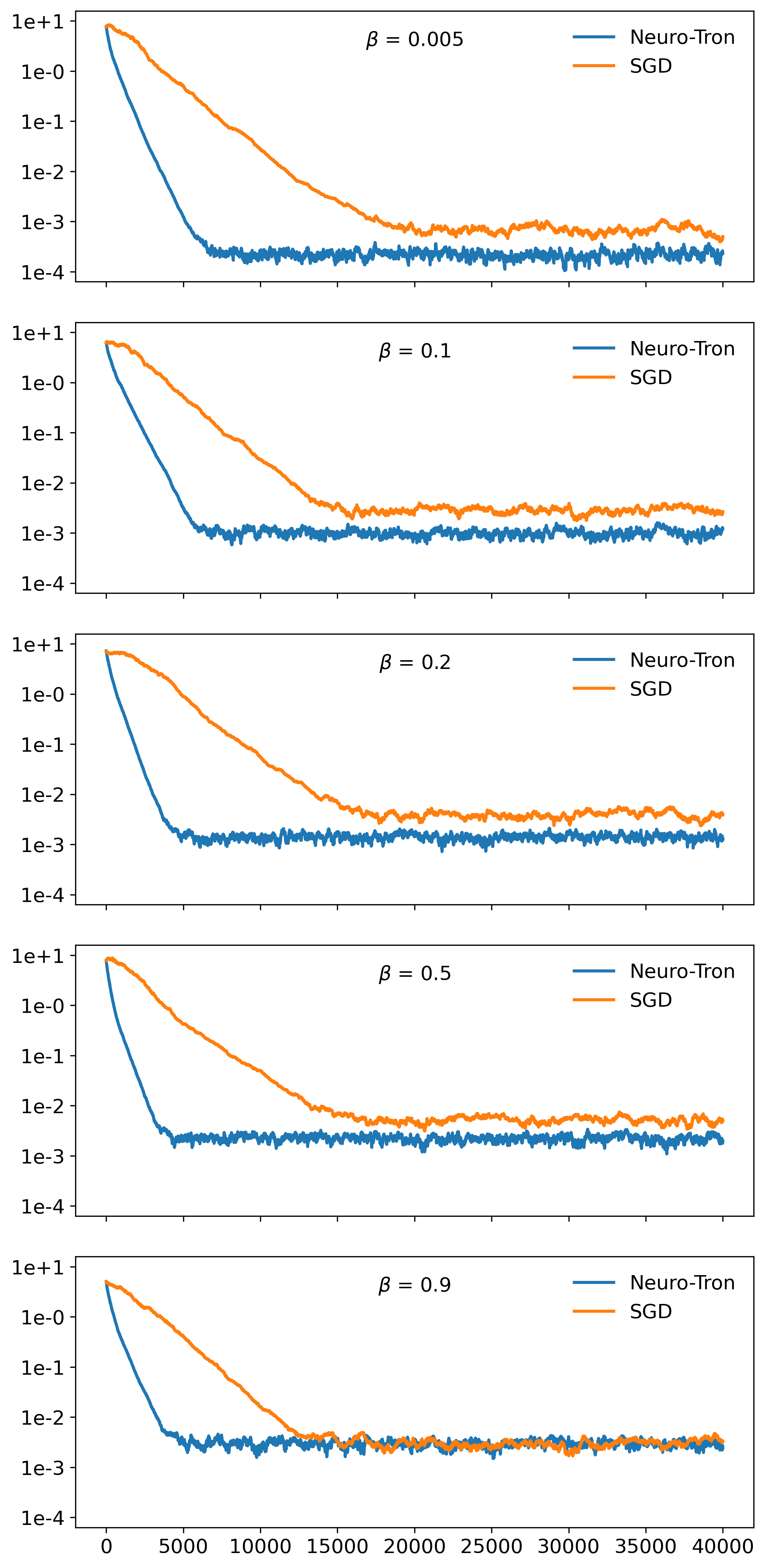}
  \caption{Neuro-Tron versus SGD per $\beta$ value.}
  \label{fig:tron_vs_sgd_beta_normal_small_var}
\end{subfigure}
\caption{Simulation-based comparison
between Algorithm \ref{tronalgo} (Neuro-Tron) and SGD.
Input data are sampled from $\mathcal{N}(\mu=0, \sigma^2=1)$.
(a): Parameter recovery errors
for different adversarial noise bounds $\theta_{\star}$.
(b): Parameter recovery errors
for different probabilities $\beta$ of adversarial attack.
}
\label{fig:tron_vs_sgd_normal_small_var}
\end{figure}

\begin{figure}[t!]
\centering
\begin{subfigure}{.5\textwidth}
  \centering
  \includegraphics[width=1.\linewidth]{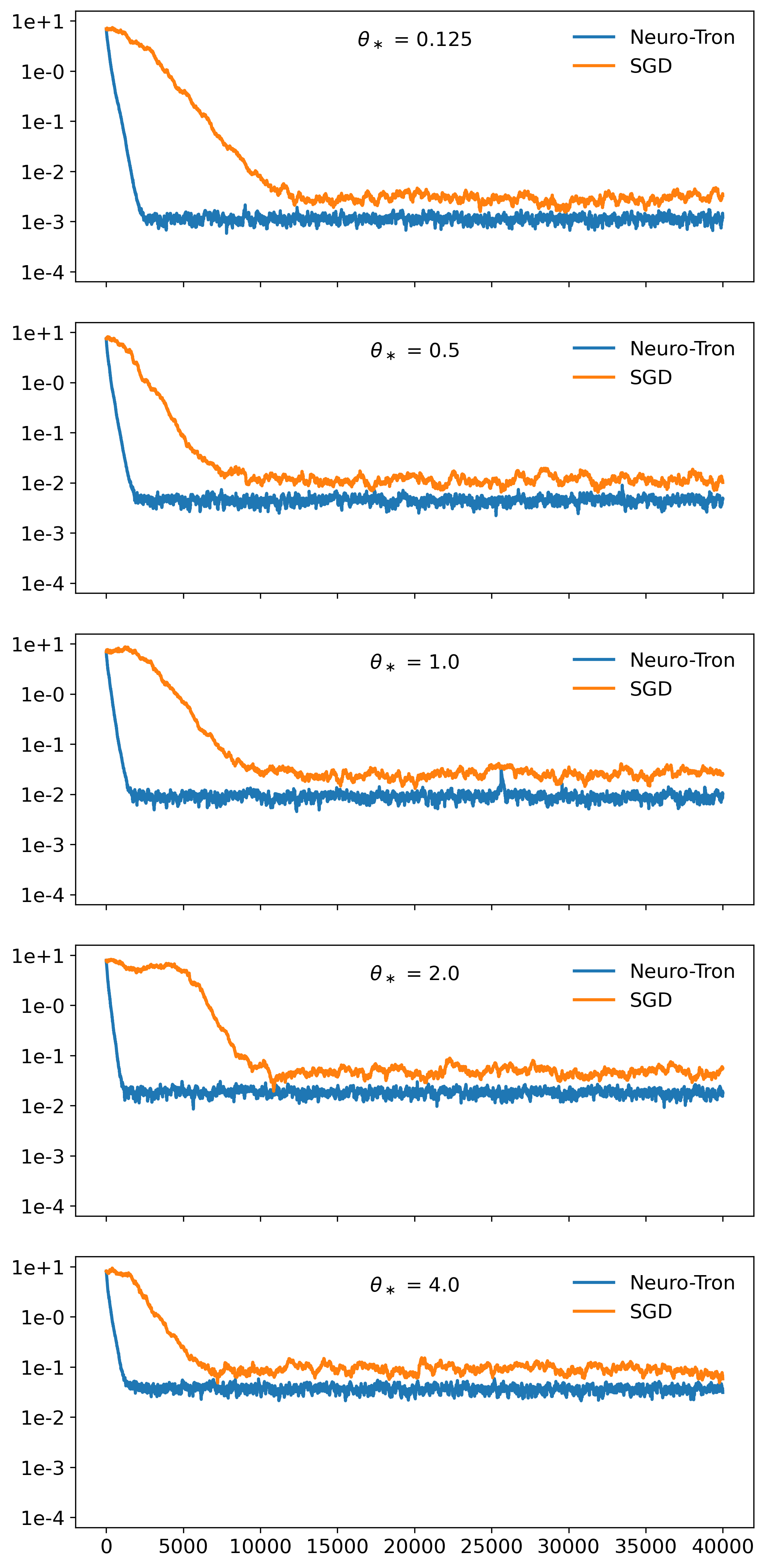}
  \caption{Neuro-Tron vs SGD per $\theta_{\star}$ value.}
  \label{fig:tron_vs_sgd_theta_tdist}
\end{subfigure}%
\begin{subfigure}{.5\textwidth}
  \centering
  \includegraphics[width=1.\linewidth]{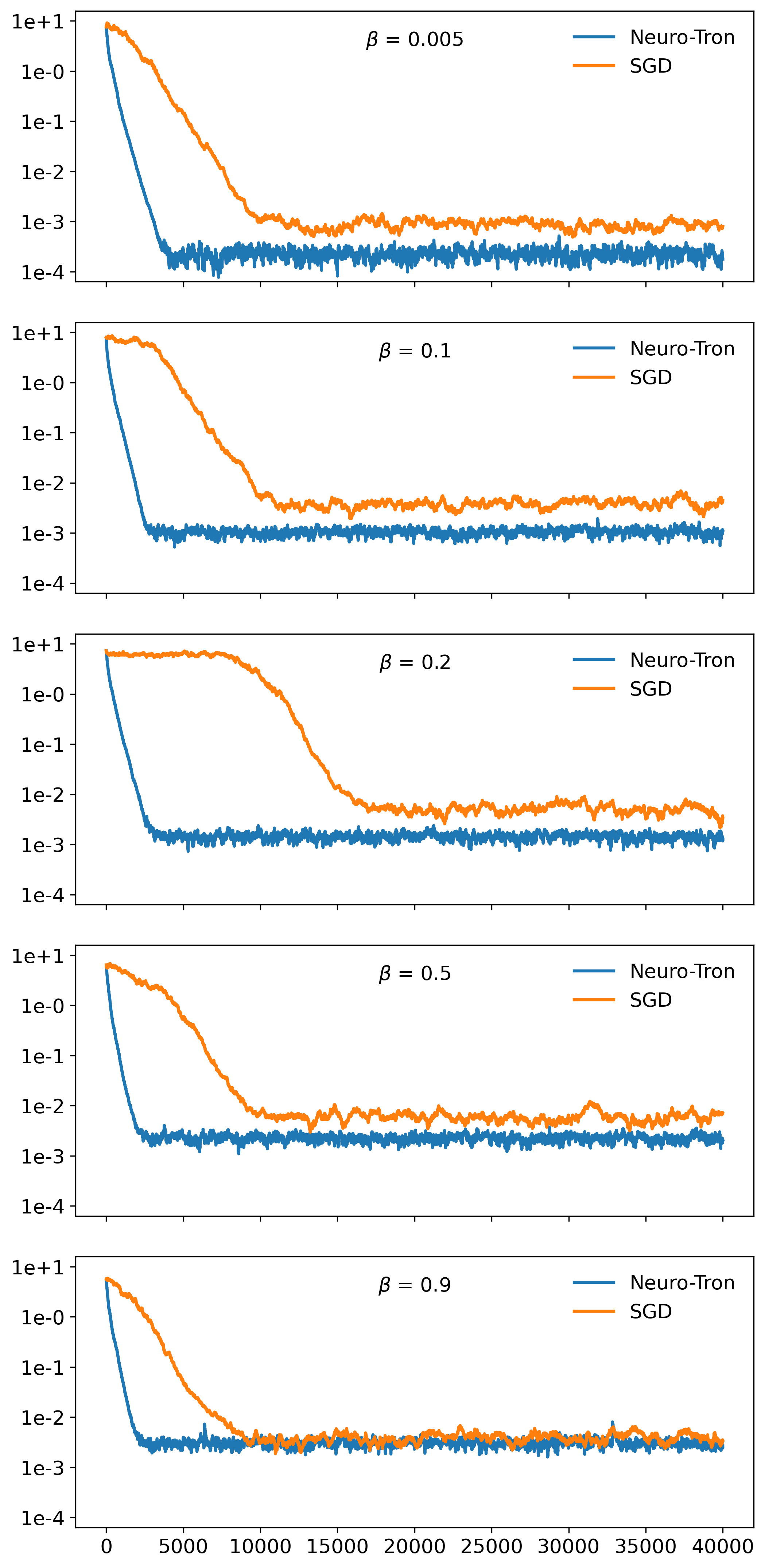}
  \caption{Neuro-Tron vs SGD per $\beta$ value.}
  \label{fig:tron_vs_sgd_beta_tdist}
\end{subfigure}
\caption{Simulation-based comparison
between Algorithm \ref{tronalgo} (Neuro-Tron) and SGD.
Input data are sampled from Student's $\mbox{t}(\nu=4)$.
(a): Parameter recovery errors
for different adversarial noise bounds $\theta_{\star}$.
(b): Parameter recovery errors
for different probabilities $\beta$ of adversarial attack.
}
\label{fig:tron_vs_sgd_tdist}
\end{figure}

Figure~\ref{fig:tron_varying_theta_and_beta} provides a
simulation-based validation
of Theorem \ref{main2} regarding the performance
of Algorithm \ref{tronalgo} (Neuro-Tron).
In each plot of Figure~\ref{fig:tron_varying_theta_and_beta},
input data are sampled from a fixed distribution.
On the left-hand side of Figure~\ref{fig:tron_varying_theta_and_beta},
increasing the magnitude of attack (noise bound) $\theta_{\star}$
increases the parameter recovery error.
On the right-hand side of Figure~\ref{fig:tron_varying_theta_and_beta},
increasing the probability of attack $\beta$
increases the parameter recovery error.

To further validate Neuro-Tron via simulation,
Figure~\ref{fig:no_attack} in Appendix~\ref{app:tron_no_attack} provides
parameter recovery errors in the absence of data-poisoning attack
($\theta_{\star}=0$).
Recovery errors are in the vicinity of $10^{-14}$ for $\theta_{\star}=0$
across different input data distributions,
demonstrating the capacity of Neuro-Tron to recover network parameters
under no attack.
Moreover, Figure~\ref{fig:no_attack}
shows an anticipated degradation in parameter recovery
under relatively small magnitude of attack ($\theta_{\star}=0.125$)
when comparing to no attack ($\theta_{\star}=0$).

Figures
\ref{fig:tron_vs_sgd_normal_small_var},
\ref{fig:tron_vs_sgd_tdist}
in this section
and Figures
\ref{fig:tron_vs_sgd_laplace},
\ref{fig:tron_vs_sgd_normal_large_var}
in Appendix~\ref{app:tron_vs_sgd}
provide a simulation-based comparison between
Neuro-Tron and SGD
for different input data distributions,
noise bounds $\theta_{\star}$
and probabilities $\beta$ of attack.
These figures provide empirical evidence
that Neuro-Tron attains
smaller parameter recovery error
and faster rate of convergence
than SGD under data-poisoning attacks.

We note that even with input data distributions,
such as $\mbox{Laplace}(\mu=0, b=2)$ which have tails heavier than the Gaussian, we see in 
Figure~\ref{fig:tron_vs_sgd_laplace} that
Neuro-Tron retains its advantage over SGD.
More strikingly, Neuro-Tron outperforms SGD
under Student's $\mbox{t}(\nu=4)$ distribution as seen in Figure \ref{fig:tron_vs_sgd_tdist}.
Note that $\mbox{t}(\nu=4)$ has infinite kurtosis (fourth moment),
and therefore it is not covered by the assumptions of Theorem~\ref{main2};
nevertheless, our simulations demonstrate that Neuro-Tron attains
analogous parameter recovery accuracy with $\mbox{t}(\nu=4)$ as it does
with the other three input data distributions.

\clearpage 
\section{Conclusion}\label{sec:conc}

In this paper, we provide the first
provably robust training algorithm
for a class of finite-width neural networks
under a data-poisoning attack.
In particular,
we have constructed
an iterative stochastic gradient-free algorithm which,
up to a given level of parameter approximation accuracy
and level of probabilistic confidence,
performs supervised learning on
a finite-width  neural network
in the presence of a malicious oracle
adding noise to some true continuous output.
We also establish that our performance guarantees
are nearly-optimal in the worst case of
attack on every output point.

Three open questions arise based on the present results.
Firstly, it remains to extend our results
to broader classes of neural networks and to data distributions with lesser number of moments being finite than assumed in Theorem~\ref{main2}.
Secondly, an open question is to characterize
the approximation accuracy and confidence trade-off
of Theorem \ref{main2}
as a function of the probability
of adversarial attack.
Thirdly, alternative adversarial attacks
can be considered,
conducting non-additive distortions to the output data
or corrupting the input data. 

\section*{Acknowledgements}
We would like to thank Amitabh Basu for extensive discussions on various parts of this paper. The first author's research is supported by NSF DMS 2124222. The second author would like to thank the MINDS Data Science Fellowship of
Johns Hopkins University
for supporting this work. The second author would also like to acknowledge the extensive discussions on this topic with Anup Rao, Sridhar Mahadevan, Pan Xu and Wenlong Mou when he was interning at Adobe, San Jose, during the summer of $2019$.

%. For example, it isnt yet clear as to what is the largest additive distortion to the labels that the adversary can do and yet some algorithm will still succeed.

%\bibliographystyle{spmpsci}
%\clearpage 
%\bibliographystyle{abbrv}
\bibliographystyle{elsarticle-num-names} 
\bibliography{references}

\clearpage 

\appendix

\section{Proof of Theorem \ref{main2}}\label{sec:proofthm1}

% \mynote{
% \begin{definition}[{\bf Single Filter Neural Nets of Depth-$2$ and Width-$k$}]%\label{net} 
% Given a set of $k$ \textit{sensing matrices} $\mathcal{A} = \{\A_i \in \R^{r\times n} \mid i = 1,\ldots,k \}$, an $\alpha$-leaky $\relu$ activation mapping, $ \R \ni y \mapsto \sigma(y) =  y{\bf 1}_{y \geq 0} + \alpha y{\bf 1}_{y <0} \in \R$, a \textit{filter} space ${\cal W}_1 \subseteq \R^r$, a space of values for the second layer weights ${\cal W}_2 \subseteq \R^k$, we define the function class $\mathcal{F}_{k,\alpha,\mathcal{A}, {\cal W}_{1,2}}$ as, 

% \begin{align*} 
%  \mathcal{F}_{k,\alpha,\mathcal{A},{\cal W}_{1,2}}  = \Big \{f_{\q,\w} : \R^n \ni \x \mapsto  \frac{1}{k} \sum_{i=1}^k q_i \cdot \sigma \Big ( \w^\top \A_i\x \Big )  \in \R ~|~ \w \in \mathcal{W}_1 ~\& ~\q \in {\cal W}_2 \Big \}
% \end{align*}
% \end{definition}
% %We note that because of the positive homogeneity of $\sigma$, one can always rescale the $\A_i$ matrices by a positive real number and thus the above parameterization of the last layer's weights is without loss of generality.  The prefactor of $\frac{1}{\sqrt{k}}$ ensures that in expectation the outer layer weight vector is of unit squared norm. 
% }

\begin{proof}
Between consecutive iterates of the algorithm we have,
\begin{align*}
\nonumber \norm{\w^{(t+1)}-\w^*}^2 
&= \norm{\w^{(t)} +\eta\g^{(t)} - \w^*}^2 \\
&= \norm{\w^{(t)}-\w^*}^2 + \eta^2\norm{\g^{(t)}}^2 + 2\eta\langle\w^{(t)}-\w^*,\g^{(t)}\rangle.  
\end{align*}
Let the training data sampled till the iterate $t$ be $S_t := \bigcup_{i=1}^t s_i$. We overload the notation to also denote by $S_t$, the sigma-algebra generated by the samples seen {\it and the $\alpha$s} till the $t$-th iteration. Conditioned on $S_{t-1}$ , $\w_t$ is determined and $g_t$ is random and dependent on the choice of $\s_t$ and $\{\alpha_{t_i},\xi_{t_i} \mid i =1,\ldots,b\}$. We shall denote the collection of random variables $\{\alpha_{t_i} \mid i =1,\ldots,b\}$ as $\alpha_t$. Then taking conditional expectations w.r.t $S_{t-1}$ of both sides of the above equation we have,
\begin{align}\label{DBothTerms:b}
 \nonumber & \E_{s_t,\alpha_t} \Bigg [ \norm{\w^{(t+1)} - \w^*}^2 \bigg| S_{t-1} \Bigg ] \\
 \nonumber &= 
 \underbrace{2\frac{\eta}{b} \cdot  \sum_{i=1}^b \E_{\x_{t_i},\alpha_{t_i}} \Bigg [ \Big \langle \w^{(t)} - \w^* , \M\Big (y_{t_i} - f_{\w^{(t)}}(\x_{t_i}) \Big) \x_{t_i}   \Big \rangle \bigg| S_{t-1} \Bigg ]}_{\text{Term }1}\\
&+ \underbrace{\eta^2 \E_{\x_{t_i},\alpha_{t_i}} \Big [ \norm{\g^{(t)}}^2 \bigg| S_{t-1} \Bigg ]}_{\text{Term } 2}  +  \E_{s_t,\alpha_t} \Bigg [ \norm{\w^{(t)} - \w^*}^2 \bigg| S_{t-1} \Bigg ].
\end{align}

%\mynote{The above expression does not depend on the specifics of the last layer's weights.}

We provide the bound for Term 1 in the Appendix \ref{ssc:A1} and arrive at
\begin{eqnarray}\label{term1}&&\hspace{-0.24 in}\text{Term } 1 \\ &\leq& -\eta(1+\alpha)\cdot \lambda_1 \cdot \norm{\w^{(t)} - \w^*}^2+ 2\eta \theta \lambda_2 \cdot \mathbb{E}\Big [\beta(\x_{t_1})\norm{\x_{t_1}} \bigg|S_{t-1} \Big ]  \cdot  \norm{\w^{(t)}-\w^*}. \nonumber
\end{eqnarray}

Now we split the {\rm Term 2} in the RHS of equation \ref{DBothTerms:b} as follows:
% We note that the random variables above that are conditioned on $S_{t-1}$ are uncorrelated to functions of $\w^{(t)}$. Hence we can take total expectations of the above to get, 
% \begin{align}\label{term1}
% \nonumber &\mathbb{E} \Bigg [ 2\eta \langle \w^{(t)}-\w^*,\g^{(t)} \rangle  \Bigg ]\\
% &=-\eta(1+\alpha)\lambda_1 \mathbb{E}\Big [ \norm{\w^{(t)}-\w^*}^2 \Big ] +2\eta \theta_*\lambda_2\beta_1 \mathbb{E}\Big [\norm{\w^{(t)}-\w^*} \Big ]
% %&\leq \left(-\eta(1+\alpha) \lambda_{\min}\Bigg (\bar{\A} \mathbb{E}\Big [\x_{t_1} \x_{t_1}^{\top} \bigg|S_{t-1} \Big ] \M^\top \Bigg)    + \eta/K  \right)  \norm{\w^{(t)}-\w^*}^2 +K \theta_*^2 \lambda_2^2 \cdot \mathbb{E}\Big [\beta(\x_{t_1})\norm{\x_{t_1}} \bigg|S_{t-1} \Big ]^2I_{\theta_*>0}
% \end{align}
%for any $K>0$. In the above steps we have invoked the definition of  $\bar{\A}$ that we had defined earlier. 
\begin{align}\label{term2}
\nonumber&\mathbb{E} \Bigg [\norm{\eta \g^{(t)}}^2 \bigg| S_{t-1}\Bigg ]\\ 
&=\frac{\eta^2}{b^2}\bigg(\mathbb{E}\Bigg[\sum_{i=1}^b(y_{t_i} - f_{\w^{(t)}}(\x_{t_i}))^2\cdot \norm{\M \x_{t_i} }^2\bigg|S_{t-1}\Bigg] \nonumber \\
&\quad + \mathbb{E}\Bigg[\sum_{i=1}^b\sum_{j=1,j \neq i}^{b}(y_{t_i} - f_{\w^{(t)}}(\x_{t_i}))(y_{t_j} - f_{\w^{(t)}}(\x_{t_j}))\cdot \x_{t_j}^{\top}\M^{\top}\M \x_{t_i}  \bigg|S_{t-1}\Bigg]\bigg)\nonumber\\
&=: \text{Term }21+\text{Term }22.
\end{align}

%\mynote{The above expression does not depend on the specifics of the last layer's weights.}

%\nonumber & \leq \eta^2\lambda^2_2 \mathbb{E}\Bigg[(y^{(t)} - f_{\w^{(t)}}(\x^{(t)}))^2\cdot \norm{\x^{(t)} }^2\bigg \mid S_{t-1}\Bigg] \\
%\newpage
We separately upperbound the {\rm Term 21} and {\rm Term 22} as outlined in the Appendix \ref{ssc:A2} and \ref{ssc:A3} respectively and arrive at, 
\begin{eqnarray}
\label{term21} \text{Term } 21&\leq \frac{\eta^2\lambda_2^2}{b}\left(c^2{\rm m}_4\norm{\w^{(t)}-\w^*}^2+ 2c\theta\beta_3 \norm{\w^{(t)}-\w^*}+\theta^2\beta_2\right).
\end{eqnarray}
\begin{eqnarray}
&&\hspace{-0.3 in}\text{Term } 22\label{term22} \\ \nonumber  &\leq&\frac{\eta^2(b^2-b)}{b^2}\left[\theta^2\lambda_2^2\beta_1^2+2\theta\lambda_2^2\beta_1c{\rm m}_2\norm{\w^{(t)}-\w^*}+\lambda_2^2c^2{\rm m}_2^2\norm{\w^{(t)}-\w^*}^2  \right].
\end{eqnarray}

Next we take total expectations of both sides of equations \ref{term1}, \ref{term21} and \ref{term22} recalling that the conditional expectation of functions of $\x_{t_i}$ w.r.t. $S_{t-1}$ are random variables which are independent of the powers of $\norm{\w^{(t)} - \w^*}$. Then we substitute the resulting expressions into the RHS of equation \ref{DBothTerms:b} to get,
\begin{align}\label{eq:total}
\nonumber \mathbb{E}& \bigg [ \norm{\w^{(t+1)}-\w^*}^2 \bigg] \\ \nonumber
%\\\nonumber & 
&\leq \bigg[ 1 + \eta^2\lambda^2_2c^2 \left( {\rm m}_2^2\left(1-\frac{1}{b}\right) +\frac{{\rm m}_4}{b} \right)  -\eta\lambda_1(1+\alpha)\bigg]\cdot \mathbb{E} \bigg [ \norm{\w^{(t)}-\w^*}^2 \bigg]  \\
&+ \bigg[2\eta^2\lambda^2_2c\theta\left( \beta_1{\rm m}_2\left(1-\frac{1}{b}\right) +\frac{\beta_3}{b} \right)  + 2\eta\lambda_2 \cdot \beta_1 \theta \bigg]\cdot \mathbb{E} \bigg [ \norm{\w^{(t)}-\w^*} \bigg]\nonumber \\
&+ \eta^2\theta^2\lambda^2_2\left( \beta_1^2\left(1-\frac{1}{b}\right) +\frac{\beta_2}{b} \right).
\end{align}

\textbf{Case I : Realizable, $\theta = 0$.}

Here the recursion above simplifies to,
\begin{align}\label{eq:totalnonoise}
\mathbb{E}&\bigg [ \norm{\w^{(t+1)}-\w^*}^2 \bigg] \\
\nonumber &\leq \bigg[ 1 + \eta^2\lambda^2_2c^2 \left( {\rm m}_2^2\left(1-\frac{1}{b}\right) +\frac{m_4}{b} \right)- \eta\lambda_1(1+\alpha)\bigg]\cdot \mathbb{E} \bigg[ \norm{\w_{t}-\w^*}^2 \bigg] 
\end{align}

Let $\kappa = 1 + \eta^2\lambda^2_2c^2 \left( {\rm m}_2^2(1-1/b) +{\rm m}_4/b \right)- \eta\lambda_1(1+\alpha) $. Thus, for all $t \in \mathbb{Z}^+$,
\begin{align*}
\mathbb{E} \bigg [ \norm{\w^{(t)}-\w^*}^2 \bigg] 
\leq \kappa^{t-1} \mathbb{E} \bigg [ \norm{\w^{(1)}-\w^*}^2 \bigg]. 
\end{align*}
Recalling that $c^2 = (1+\alpha)\lambda_3$, we can verify that the choice of step size given the Theorem is $\eta = \frac{1}{\gamma}\cdot \frac{\lambda_1(1+\alpha)}{\lambda_2^2c^2(\mathrm{m}_4/b+\mathrm{m}_2^2(1-1/b))}$ and the assumption on $\gamma$ ensures that for this $\eta$, $\kappa = 1 - \frac{\gamma - 1}{\gamma^2}\frac{\lambda_1^2}{\lambda_2^2\lambda_3(\mathrm{m}_4/b+\mathrm{m}_2^2(1-1/b))} \in (0,1)$.
Therefore, for ${\rm T} = {\large \mathcal{O}}\Big( \log \left( \frac{\norm{\w^{(1)}-\w^*}} {\epsilon^2\delta}\right)\Big)$, we have 
\[ \mathbb{E}\Big[\norm{\w^{({\rm T})}-\w^*}^2 \Big] \leq \epsilon^2\delta . \]    
The conclusion now follows from Markov's inequality. 

\textbf{Case II : Realizable + Adversarial Noise, $\theta \in (0,\theta_*)$.}

Note that the linear term $\norm{\w^{(t)}-\w^*}$ in equation \ref{eq:total} is a unique complication that is introduced here because of the absence of distributional assumptions on the noise in the labels. We can now upperbound the linear term using the {\rm AM-GM} inequality as follows - which also helps decouple the adversarial noise terms from the distance to the optima. Then equation \ref{eq:total} implies, 
\begin{align*}
%&\therefore 
%\mathbb{E}& \Bigg[ \norm{\w^{(t+1)}-\w^*}^2 \Bigg]\\
%\\ &
\Delta_{t+1} 
&\leq \Bigg[\eta^2\lambda^2_2c^2\left( (\beta_1{\rm m}_2+{\rm m}_2^2)\left(1-\frac{1}{b}\right)
+\frac{\beta_3+{\rm m}_4}{b} \right)
-\eta\lambda_2\left(\frac{\lambda_1(1+\alpha)}{\lambda_2}
-\beta_1\right)+1\Bigg]
%\mathbb{E} \Bigg[  \norm{\w_{t}-\w^*}^2 \Bigg ]\\
\Delta_{t}\\
&+\theta^2 \Big(\eta^2\lambda^2_2\left( (\beta_1^2+\beta_1{\rm m}_2)\left(1-\frac{1}{b}\right)
+\frac{\beta_2+\beta_3}{b} \right)
+\eta\lambda_2 \beta_1\Big),
\end{align*}
where $\Delta_t := \mathbb{E} \Big [\norm{\w^{(t)}-\w^*}^2 \Big ]$.
We introduce the following notation:
$\eta' := \eta\lambda_2$,
$b_* := \frac{\lambda_1(1+\alpha)}{\lambda_2}-{\beta}_1$,
$c_1=c^2\left( (\beta_1{\rm m}_2+{\rm m}_2^2)\left(1-\frac{1}{b}\right) +\frac{\beta_3+{\rm m}_4}{b} \right)$, 
$c_2 := \theta^2 \Big(\eta^2\lambda^2_2\left( (\beta_1^2+\beta_1{\rm m}_2)\left(1-\frac{1}{b}\right) +\frac{\beta_2+\beta_3}{b} \right)$ and $c_3 = \theta^2 \beta_1$. Then the dynamics of the algorithm is given by,
\begin{align*}
\Delta_{t+1} \leq (1-\eta'b_* + \eta'^2c_1)\Delta_t + \eta'^2c_2 + \eta'c_3. 
\end{align*}

We note that the above is of the same form as lemma \ref{recurse} in the Appendix with $\Delta_1 = \norm{\w_1 - \w^*}^2$. We invoke the lemma with $\epsilon'^2 := \epsilon^2 \delta$ such that equation \ref{condmain1} holds. 

This along with the bound on noise that $\theta \in (0, \theta_*)$ ensures that,  $\frac{c_3}{b_*} = \frac{\theta^2}{{\rm c}_{\rm trade-off}} = \frac{\theta^2}{\theta_*^2}\cdot \epsilon^2\delta < \epsilon^2 \delta < \Delta_1$ as required by Lemma \ref{recurse}, Appendix \ref{app:rec}. 

%\mynote{The appendix reference needs to be fixed here!}

Recalling the definition of ${\rm c}_{\rm rate}$ as given in the theorem statement we can see that $\frac{\frac{c_2}{c_1}+\gamma \cdot \frac{c_3}{b_*}}{\gamma - 1} = \frac{\theta^2}{{\rm c}_{\rm rate}}$ and hence we can read off from Lemma \ref{recurse}, Appendix \ref{app:rec}, that at the value of ${\rm T}$ as specified in the theorem statement we have, 

%from lemma \ref{recurse} that, 

\[ \Delta_{\rm T} = \mathbb{E} \Big [\norm{\w_{\rm T}-\w^*}^2 \Big ] \leq \epsilon^2 \delta \]
and the needed high probability guarantee follows by Markov inequality.
\end{proof}

\section{Bounds needed in the proof in Section \ref{sec:proofthm1}}\label{sec:proofthm1_2}

In the following sub-sections we provide the upperbounds for {\rm Term 1}, {\rm Term 21} and {\rm Term 22} in the previous appendix. 

\subsection{Upperbound for {\rm Term 1} }\label{ssc:A1}
For Term 1 in equation (\ref{DBothTerms:b}) we proceed by observing that conditioned on $S_{t-1}$, $\w^{(t)}$ is determined while $\w^{(t+1)}$ and $\g^{(t)}$ are random. Thus we compute the following conditional expectation (suppressing the subscripts of $\x_{t_i}, \alpha_{t_i}$),

\begin{align}
\nonumber \text{Term }1 & = \mathbb{E} \Bigg [ 2\eta \langle\w^{(t)}-\w^*,\g^{(t)}\rangle \bigg| S_{t-1} \Bigg]  \\
%& = 2\eta \mathbb{E}\Bigg[\Big (y^{(t)} - f_{\w^{(t)}}(\x^{(t)}) \Big)\cdot (\w^{(t)}-\w^*)^\top \M \x^{(t)}\bigg|S_{t-1}\Bigg]\\
\nonumber & = 2\frac{\eta}{b} \sum_{i=1}^b\mathbb{E}\Bigg[ \Big (f_{\w^*}(\x_{t_i}) + \alpha_{t_i}\xi_{t_i} - f_{\w^{(t)}}(\x_{t_i}) \Big) (\w^{(t)}-\w^*)^\top \M  \x_{t_i} \bigg| S_{t-1}\Bigg] \\
\nonumber & = 2\frac{\eta}{b}\sum_{i=1}^b\mathbb{E}\Bigg[\Big (f_{\w^*}(\x_{t_i}) - f_{\w^{(t)}}(\x_{t_i}) \Big)\cdot (\w^{(t)}-\w^*)^\top \M  \x_{t_i} \bigg| S_{t-1}\Bigg]  \nonumber \\
& \quad\quad + 2\frac{\eta}{b}\sum_{i=1}^b\mathbb{E}\Bigg[\alpha_{t_i}\xi_{t_i}(\w^{(t)}-\w^*)^\top \M \x_{t_i} \bigg| S_{t-1}\Bigg]\\
\nonumber & \leq \frac{-2\eta}{bk}\sum_{i=1}^b\sum_{j=1}^k\mathbb{E}\Bigg[\Big 
(\sigma({\w^{(t)}}^\top \A_j\x_{t_i}) - \sigma(\w^{*\top} \A_j\x_{t_i})  \Big)  (\w^{(t)}-\w^*)^\top \M  \x_{t_i} \bigg|S_{t-1}\Bigg] \\
&\qquad \qquad+ 2\frac{\eta \theta}{b}\sum_{i=1}^b\mathbb{E}\Bigg[\beta(\x_{t_i})\cdot |(\w^{(t)}-\w^*)^\top \M \x_{t_i}| \bigg|S_{t-1}\Bigg].
\end{align}

%\mynote{ The first term in the expression above changes for last layer weights}

\noindent We simplify the first term above by recalling an identity proven in \cite{goel2018learning}, which we have reproduced here as Lemma \ref{lem:convo} in Appendix \ref{app:lem}. Thus we get, 
\begin{align*}
\nonumber &\mathbb{E} \Bigg [ 2\eta \langle \w^{(t)}-\w^*,\g^{(t)} \rangle \bigg| S_{t-1} \Bigg ]\\
\nonumber &\leq \frac{-\eta(1+\alpha)}{bk}\sum_{i=1}^b \sum_{j=1}^k \mathbb{E} \Bigg[ 
(\w^{(t)} - \w^* )^\top \A_j\x_{t_i} (\w^{(t)}-\w^*)^\top \M  \x_{t_i} \bigg|S_{t-1} \Bigg] \\ \nonumber 
& \quad\quad + 2\frac{\eta \theta_*}{b} \sum_{i=1}^b\norm{\w^{(t)}-\w^*}\cdot \mathbb{E}\Bigg[\beta(\x_{t_i})\norm{\M \x_{t_i}} \bigg|S_{t-1}\Bigg]\\
\nonumber &\leq-\eta(1+\alpha) (\w^{(t)} - \w^*)^\top \;\bar{\A} \; \mathbb{E}\Big [\x_{t_1} \x_{t_1}^{\top} \bigg|S_{t-1} \Big ]\; \M^\top\; (\w^{(t)}-\w^*) \\ \nonumber 
& \quad \quad + 2\eta \theta_* \norm{\w^{(t)}-\w^*} \sqrt{\lambda_{\max}(\M^\top\M)} \cdot \mathbb{E}\Big [\beta(\x_{t_1})\norm{\x_{t_1}} \bigg|S_{t-1} \Big ]\\
\nonumber &\leq -\eta(1+\alpha) \cdot \lambda_{\min}\Bigg (\bar{\A} \mathbb{E}\Big [\x_{t_1} \x_{t_1}^{\top} \bigg|S_{t-1} \Big ] \M^\top \Bigg) \cdot \norm{\w^{(t)} - \w^*}^2
\\ \nonumber 
& \quad \quad+2\eta \theta_* \cdot \mathbb{E}\Big [\beta(\x_{t_1})\norm{\x_{t_1}} \bigg|S_{t-1} \Big ]   \cdot \sqrt{\lambda_{\max}(\M^T\M)}  \norm{\w^{(t)}-\w^*}\\
\nonumber&\leq -\eta(1+\alpha)\cdot \lambda_1 \cdot \norm{\w^{(t)} - \w^*}^2  \\ \nonumber 
& \quad \quad+ 2\eta \theta \lambda_2 \cdot \mathbb{E}\Big [\beta(\x_{t_1})\norm{\x_{t_1}} \bigg|S_{t-1} \Big ]  \cdot  \norm{\w^{(t)}-\w^*}.
\end{align*} 

\noindent We have invoked the i.i.d. nature of the data samples to invoke the definition of the $\lambda_1$ in above. 

\subsection{Upperbound for {\rm Term 21}}\label{ssc:A2}
For Term 21 in  equation (\ref{term2}) we get, 

\begin{align}
\nonumber &\text{Term }21\leq \frac{\eta^2\lambda^2_2}{b}  \cdot \mathbb{E}\Bigg[\big(f_{\w^*}(\x_{t_1}) + \alpha_{t_1}\xi_{t_1} - f_{\w^{(t)}}(\x_{t_1})\big)^2\cdot \norm{\x_{t_1} }^2 \bigg|S_{t-1}\Bigg] \\
\nonumber &\leq \frac{\eta^2\lambda^2_2}{b}  \cdot
\mathbb{E}\Bigg[\bigg ( \left(f_{\w^*}(\x_{t_1}) - f_{\w^{(t)}}(\x_{t_1})\right)^2 + 2\alpha_{t_1}\xi_{t_1}\left(f_{\w^*}(\x_{t_1})-f_{\w^{(t)}}(\x_{t_1})\right) + \alpha_{t_1}^2\xi_{t_1}^2\bigg )  
\cdot \norm{\x_{t_1}}^2 \bigg|S_{t-1}\Bigg]\\
& \leq \frac{\eta^2\lambda_2^2c^2}{b}\mathbb{E}\Bigg[
   \norm{\x_{t_1}}^4 \bigg| S_{t-1}\Bigg]   \norm{\w^{(t)}-\w^*}^2 
   + \frac{2\eta^2\lambda^2_2c\theta}{b}  \mathbb{E}\Bigg[ \beta(\x_{t_1})\norm{\x_{t_1} }^3 \bigg| S_{t-1}\Bigg]  \norm{\w^{(t)}-\w^*} \nonumber \\
  &\quad \quad + \frac{\eta^2\lambda^2_2 \theta^2}{b} \mathbb{E}\Bigg[\beta(\x_{t_1})\norm{\x_{t_1}}^2 \bigg| S_{t-1} \Bigg]\nonumber\\
&= \frac{\eta^2\lambda_2^2}{b}\left(c^2{\rm m}_4\norm{\w^{(t)}-\w^*}^2+ 2c\theta\beta_3 \norm{\w^{(t)}-\w^*}+\theta^2\beta_2\right). \nonumber
%&\leq \eta^2\lambda^2_2 c^2 \cdot {\rm m}_4 \norm{\w^{(t)}-\w^*}^2+ 
%2\eta^2\lambda^2_2 c \cdot {\rm m}_3 \theta \norm{\w^{(t)}-\w^*} + \eta^2\lambda^2_2 \cdot {\rm m}_2 \theta^2
\end{align}

In the above lines we have invoked Lemma \ref{lem:diff_f_sq} from Appendix \ref{app:lem} twice to upperbound the term, $\vert \big(f_{\w^*}(\x^{(t)}) - f_{\w^{(t)}}(\x^{(t)})\big) \vert$ and we have defined, $$c^2 := (1+\alpha)^2\lambda_3 = \frac{(1+\alpha)^2}{k}\Big ( \sum_{i=1}^k \lambda_{\max}( {\A_i\A_i^\top} ) \Big ).$$

Next we proceed with Term 22 keeping in mind the independence of $x_{t_i}$ and $x_{t_j}$ for $i \neq j$,

\subsection{Upperbound for {\rm Term 22}}\label{ssc:A3}

For Term 22 in equation (\ref{term2}) we get,
\begin{align}
\nonumber &\text{Term }22\\
\nonumber &=\frac{\eta^2(b^2-b)}{b^2}\mathbb{E}\Bigg[(\alpha_{t_1}\xi_{t_1}+ f_{\w^{*}}(\x_{t_1})- f_{\w^{(t)}}(\x_{t_1}))(\alpha_{t_2}\xi_{t_2}+ f_{\w^{*}}(\x_{t_2}) - f_{\w^{(t)}}(\x_{t_2}))\cdot \x_{t_2}^{\top}\M^{\top}\M \x_{t_1}  \bigg|S_{t-1}\Bigg]\\
&\leq \frac{\eta^2(b^2-b)}{b^2}\Bigg[\theta^2 \left(\mathbb{E}_{x_{t_1}} \Bigg[ \beta(x_{t_1})\|\M x_{t_1}\|\bigg|S_{t-1}\Bigg]\right)^2\nonumber \\
&\quad\quad\quad + 2\theta\mathbb{E}_{x_{t_1}} \Bigg[ (f_{\w^{*}}(\x_{t_1}) - f_{\w^{(t)}}(\x_{t_1}))\|\M x_{t_1}\|\bigg|S_{t-1}\Bigg]\mathbb{E}_{x_{t_1}} \Bigg[\beta(x_{t_1})\|\M x_{t_1}\|\bigg|S_{t-1}\Bigg]  \nonumber \\
\nonumber &\quad\quad\quad\quad\quad +\mathbb{E}_{x_{t_1}} \Bigg[ (f_{\w^{*}}(\x_{t_1}) - f_{\w^{(t)}}(\x_{t_1}))\|\M x_{t_1}\|\bigg|S_{t-1}\Bigg]^2 \Bigg]\\
&\leq \frac{\eta^2(b^2-b)}{b^2}\left[\theta^2\lambda_2^2\beta_1^2+2\theta_*\lambda_2^2\beta_1c{\rm m}_2\norm{\w^{(t)}-\w^*}+\lambda_2^2c^2{\rm m}_2^2\norm{\w^{(t)}-\w^*}^2  \right]. \nonumber 
\end{align}

%\newpage 
\section{Lemmas For Theorem \ref{main2}}\label{app:lem}

\begin{lemma}[Lemma 1, \cite{goel2018learning}]\label{lem:convo}
If ${\cal D}$ is parity symmetric distribution on $\R^n$ and $\sigma$ is an $\alpha-$Leaky $\relu$ then $\forall ~\a, {\ve b} \in \R^n$, 

\[\E_{\x \sim {\cal D}} \Big [ \sigma (\a^\top \x) {\ve b}^\top  \x  \Big ] = \frac{1+\alpha}{2}\E_{\x \sim {\cal D}} \Big [ (\a^\top \x) ({\ve b}^\top \x)  \Big ] \] .
\end{lemma}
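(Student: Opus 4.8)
The plan is to reduce the identity to a parity argument by splitting the leaky $\relu$ into a linear part and an absolute-value part. The key algebraic fact is that for any $y \in \R$ one can write
\[
\sigma(y) = \frac{1+\alpha}{2}\,y + \frac{1-\alpha}{2}\,\abs{y},
\]
which I would verify case-wise: for $y \geq 0$ the right side equals $\tfrac{1+\alpha}{2}y + \tfrac{1-\alpha}{2}y = y$, and for $y < 0$ it equals $\tfrac{1+\alpha}{2}y - \tfrac{1-\alpha}{2}y = \alpha y$, matching the definition of $\sigma$ in both regimes. This cleanly separates the even (linear) and odd (absolute-value) behavior of the activation.

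First I would substitute $y = \a^\top \x$ into this decomposition and use linearity of expectation to obtain
\[
\E_{\x \sim \D}\big[\sigma(\a^\top \x)\,\b^\top \x\big]
= \frac{1+\alpha}{2}\,\E_{\x \sim \D}\big[(\a^\top \x)(\b^\top \x)\big]
+ \frac{1-\alpha}{2}\,\E_{\x \sim \D}\big[\abs{\a^\top \x}\,\b^\top \x\big].
\]
The first term is already exactly the claimed right-hand side, so the entire task reduces to showing that the second expectation vanishes.

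The crux is the parity argument. I would set $g(\x) := \abs{\a^\top \x}\,(\b^\top \x)$ and observe that $g$ is \emph{odd}: replacing $\x$ by $-\x$ gives $\abs{-\a^\top \x}\,(-\b^\top \x) = -\abs{\a^\top \x}\,(\b^\top \x) = -g(\x)$, since the absolute value is insensitive to the sign flip while the linear factor changes sign. By \textbf{Assumptions 2.1}, $\x \sim \D$ implies $-\x \sim \D$, so a change of variables gives $\E[g(\x)] = \E[g(-\x)]$; combined with $g(-\x) = -g(\x)$ this forces $\E[g(\x)] = -\E[g(\x)]$, hence $\E[g(\x)] = 0$. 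Substituting back annihilates the $\tfrac{1-\alpha}{2}$ term and yields the stated identity.

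I do not anticipate a genuine obstacle here. The only point requiring mild care is confirming that the absolute-value expectation is well-defined before invoking the sign-flip cancellation, which follows from the second-moment finiteness in \textbf{Assumptions 2.2} together with Cauchy--Schwarz and the bound $\abs{\a^\top \x}\,\abs{\b^\top \x} \leq \norm{\a}\,\norm{\b}\,\norm{\x}^2$. The conceptual heart of the proof is simply that parity symmetry kills the odd (absolute-value) component of the leaky $\relu$ while preserving its even (linear) component rescaled by $\tfrac{1+\alpha}{2}$.
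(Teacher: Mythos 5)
Your proof is correct and complete. Note that the paper itself does not prove this lemma at all --- it is imported verbatim as Lemma~1 of \cite{goel2018learning} --- so the only comparison available is with the argument in that cited reference, which runs the same parity idea through a slightly different mechanism: there one splits the expectation by the sign of $\a^\top \x$ and uses the map $\x \mapsto -\x$ to pair the two half-spaces, so that the $\sigma$-is-identity region contributes $\tfrac{1}{2}\E[(\a^\top\x)(\b^\top\x)]$ and the $\sigma$-is-$\alpha\cdot$identity region contributes $\tfrac{\alpha}{2}\E[(\a^\top\x)(\b^\top\x)]$. Your decomposition $\sigma(y) = \tfrac{1+\alpha}{2}\,y + \tfrac{1-\alpha}{2}\,\abs{y}$ repackages this by isolating the even/odd structure up front, reducing everything to the single observation that $\abs{\a^\top\x}\,(\b^\top\x)$ is odd and hence has zero mean under a parity-symmetric law; this is arguably cleaner and generalizes immediately to any activation split into its symmetric and antisymmetric parts. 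Your added remark on integrability is also a genuine (if minor) improvement in rigor: the lemma as stated asserts an identity between expectations without hypothesizing their existence, and your appeal to the bound $\abs{\a^\top\x}\,\abs{\b^\top\x} \leq \norm{\a}\,\norm{\b}\,\norm{\x}^2$ together with the finiteness of ${\rm m}_2$ from \textbf{Assumptions 2.2} closes that gap.
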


\begin{lemma}\label{lem:diff_f_sq}
\begin{align*}
&(f_{\w_*}(\x) - f_{\w}(\x) )^2 \leq (1+\alpha)^2  \Big ( \frac{1}{k}\sum_{i=1}^k \lambda_{\max}( {\A_i\A_i^\top} ) \Big )\norm{\w_* - \w}^2   \norm{\x}^2.
\end{align*}
\end{lemma}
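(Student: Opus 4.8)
The plan is to reduce the squared difference of the two network outputs to a per-gate estimate, and then apply the Lipschitz property of the leaky $\relu$ followed by two applications of Cauchy--Schwarz. First I would write the difference as an average over the $k$ gates: $f_{\w_*}(\x) - f_{\w}(\x) = \frac{1}{k}\sum_{i=1}^k d_i$, where $d_i := \sigma(\w_*^\top \A_i \x) - \sigma(\w^\top \A_i \x)$. Applying Jensen's inequality to the convex map $t \mapsto t^2$ (equivalently Cauchy--Schwarz with uniform weights) gives $(f_{\w_*}(\x) - f_{\w}(\x))^2 \le \frac{1}{k}\sum_{i=1}^k d_i^2$, which lets me treat each gate separately.

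The key step is then to bound each $d_i$. Since the $\alpha$-leaky $\relu$ $\sigma$ is Lipschitz with constant $\max(1,\alpha) \le 1+\alpha$ — which I would verify quickly from the decomposition $\sigma(y) = \frac{1+\alpha}{2}y + \frac{1-\alpha}{2}|y|$ together with $\big||a|-|b|\big| \le |a-b|$ — I obtain $|d_i| \le (1+\alpha)\,|(\w_* - \w)^\top \A_i \x|$. A Cauchy--Schwarz step gives $|(\w_* - \w)^\top \A_i \x| \le \norm{\w_* - \w}\,\norm{\A_i \x}$, and the operator-norm estimate $\norm{\A_i \x}^2 = \x^\top \A_i^\top \A_i \x \le \lambda_{\max}(\A_i^\top \A_i)\norm{\x}^2 = \lambda_{\max}(\A_i \A_i^\top)\norm{\x}^2$ (using that $\A_i^\top\A_i$ and $\A_i\A_i^\top$ share their nonzero eigenvalues) yields $d_i^2 \le (1+\alpha)^2 \lambda_{\max}(\A_i\A_i^\top)\norm{\w_*-\w}^2\norm{\x}^2$.

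Substituting this back into the averaged bound collapses the per-gate constants into the factor $\frac{1}{k}\sum_{i=1}^k \lambda_{\max}(\A_i\A_i^\top)$ and produces exactly the claimed inequality. The only place requiring any care is the Lipschitz constant of $\sigma$, and everything else is routine; I expect no genuine obstacle. I would merely remark that the stated constant $(1+\alpha)$ is a convenient (slightly loose when $\alpha \in [0,1]$) upper bound for the true Lipschitz constant $\max(1,\alpha)$, chosen so as to align with the factor appearing in Lemma \ref{lem:convo}.
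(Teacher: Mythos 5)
Your proof is correct and follows essentially the same route as the paper's: decompose the difference over the $k$ gates, apply Jensen's inequality to pass the square inside the average, use the $(1+\alpha)$-Lipschitz bound on the leaky $\relu$, and finish with Cauchy--Schwarz plus the spectral bound $\norm{\A_i\x}^2 \leq \lambda_{\max}(\A_i\A_i^\top)\norm{\x}^2$. Your added remarks (verifying the Lipschitz constant explicitly and noting that $\A_i^\top\A_i$ and $\A_i\A_i^\top$ share nonzero eigenvalues) are sound refinements of steps the paper takes implicitly, but they do not change the argument.
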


%\ignore{
\begin{proof}
%\mynote{This lemma changes for last layer's weights}

\begin{align*}
&\Big (f_{\w_*}(\x) - f_{\w}(\x) \Big )^2 \leq  \Big (\frac{1}{k} \sum_{i=1}^k \sigma \Big  ( \Big  \langle \A_i^\top \w_*  ,\x \Big  \rangle \Big  ) - \frac{1}{k} \sum_{i=1}^k \sigma \Big  ( \Big  \langle \A_i^\top \w ,\x \Big  \rangle \Big  ) \Big )^2\\
&\leq  \frac{1}{k} \sum_{i=1}^k \Big ( \sigma \Big  ( \Big  \langle \A_i^\top \w_*  ,\x \Big  \rangle \Big  ) - \sigma \Big  ( \Big  \langle \A_i^\top \w ,\x \Big  \rangle \Big  ) \Big )^2\\
&\leq  \frac{(1+\alpha)^2}{k} \sum_{i=1}^k \Big  \langle \A_i^\top \w_* - \A_i^\top \w ,\x \Big  \rangle ^2 = \frac{(1+\alpha)^2}{k} \sum_{i=1}^k \Big  ((\w_* - \w)^\top \A_i \x \Big  )^2\\
&= \frac{(1+\alpha)^2}{k} \sum_{i=1}^k \norm{\w_* - \w}^2 \norm{\A_i \x}^2 \leq  \frac{(1+\alpha)^2}{k} \sum_{i=1}^k \norm{\w_* - \w}^2 \lambda_{\max}( {\A_i \A_i^\top} ) \norm{\x}^2\\
&\leq \frac{(1+\alpha)^2}{k}  \Big ( \sum_{i=1}^k \lambda_{\max}( {\A_i\A_i^\top} ) \Big )\norm{\w_* - \w}^2   \norm{\x}^2.
\end{align*}
\end{proof}
%}

\section{Proof of Lemma \ref{lm1}}\label{lem:gauss}

\begin{proof}
The $\relu$ activation implies that $\alpha = 0$.
Moreover, the normality assumption for the input data yields
$\lambda_1 = \sigma^2$, $\lambda_2=\lambda_3 =1$. 
Standard results about the normal distribution further yield
\begin{equation*}
\mathbb{E}_{\x \sim\mathcal{N}(0,\sigma^2I)}
\left[ \norm{ \x }^k \right]
= \mathbb{E}_{\x \sim\mathcal{N}(0,I)}
\left[ \norm{\sigma \x}^k \right]
= \sigma^k \mathbb{E}_{\x \sim\mathcal{N}(0,I)} \left[ \| \x \|^k \right]
= \sigma^k 2^{k/2}
\frac{\Gamma\left(\frac{n+k}{2}\right)}
{\Gamma\left(\frac{n}{2}\right)}. 
\end{equation*}
Hence we have, 
\begin{equation}\label{eq:beta1}
\beta_1= \beta \mathbb{E}_{\x \sim \mathcal{N}(0,\sigma^2\mathbf{I}_{n\times n})}
\Big[ \norm{ \x } \Big] 
= \sqrt{2}\sigma \beta
\frac{\Gamma\left(\frac{n+1}{2}\right)}{\Gamma\left(\frac{n}{2}\right)}.
\end{equation}
Invoking the above,
the constant ${\rm c}_{\rm trade-off}$ in Theorem \ref{main2} simplifies to Equation \eqref{gauss}.
\end{proof}

%\clearpage 
\section{Estimating a necessary recursion}\label{app:rec}

\begin{lemma}\label{recurse} 
Suppose we have a sequence of real numbers $\Delta_1, \Delta_2, \ldots$ such that 
\[\Delta_{t+1} \leq (1-\eta' b + \eta'^2 c_1) \Delta_t + \eta'^2c_2 + \eta'c_3\]
for some fixed parameters $b,c_1, c_2, c_3 >0$ such that $\Delta_1 > \frac{c_3}{b}$ and free parameter $\eta' > 0$. Then for,

\[ ~\epsilon'^2 \in \Big ( \frac{c_3}{b} ,\Delta_1 \Big ),\quad ~\eta' = \frac{b}{\gamma c_1}, \quad \gamma > \max \Bigg \{ \frac{b^2}{c_1},  \Bigg ( \frac{\epsilon'^2 + \frac{c_2}{c_1}}{\epsilon'^2   - \frac{c_3}{b}} \Bigg ) \Bigg \}  > 1\]

it follows that $\Delta_{\rm T} \leq \epsilon'^2$ for, 
%$\exists ~{\rm T} \in \bb{Z}^+$ such that  Furthermore, ${\rm T}$ is given as 
$${\rm T} = 
{\large \mathcal{O}}\Bigg(\log \Bigg[~ 
\frac{\Delta_1}{\epsilon'^2 - \Big(\frac{\frac{c_2}{c_1}+\gamma \cdot \frac{c_3}{b}}{\gamma - 1}\Big)}~\Bigg]\Bigg) .$$ 
\end{lemma}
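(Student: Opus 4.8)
The plan is to treat the hypothesis as a one-step affine contraction and analyze it through its fixed point. First I would rewrite the recursion as $\Delta_{t+1} \leq \kappa \Delta_t + \mu$, where $\kappa := 1 - \eta' b + \eta'^2 c_1$ and $\mu := \eta'^2 c_2 + \eta' c_3$. Substituting the prescribed $\eta' = \frac{b}{\gamma c_1}$ collapses these to $\kappa = 1 - \frac{b^2}{c_1}\cdot\frac{\gamma-1}{\gamma^2}$ and, after simplification, the associated fixed point $\Delta^* := \frac{\mu}{1-\kappa} = \frac{\frac{c_2}{c_1} + \gamma\frac{c_3}{b}}{\gamma-1}$, which is exactly the quantity appearing inside the logarithm in the claimed bound on $\mathrm{T}$.

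Next I would verify the two facts that make the contraction argument valid. For $\kappa \in (0,1)$, both positivity and the upper bound reduce to checking $\frac{b^2}{c_1}\cdot\frac{\gamma-1}{\gamma^2} < 1$; since $\frac{\gamma-1}{\gamma^2} < \frac{1}{\gamma}$, this follows from the hypothesis $\gamma > \frac{b^2}{c_1}$. For $\Delta^* < \epsilon'^2$, clearing denominators (using $\epsilon'^2 > \frac{c_3}{b}$, so the relevant factor $\epsilon'^2 - \frac{c_3}{b}$ is positive) shows that $\Delta^* < \epsilon'^2$ is equivalent to $\gamma > \frac{\epsilon'^2 + c_2/c_1}{\epsilon'^2 - c_3/b}$, which is precisely the second term in the $\max$ defining the lower bound on $\gamma$.

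With these in hand, I would unroll the affine recursion in the standard way: subtracting the fixed-point identity $\Delta^* = \kappa\Delta^* + \mu$ gives $\Delta_{t+1} - \Delta^* \leq \kappa(\Delta_t - \Delta^*)$, and since $\kappa \geq 0$ an induction yields $\Delta_t - \Delta^* \leq \kappa^{t-1}(\Delta_1 - \Delta^*)$ for all $t$, where $\Delta_1 - \Delta^* > 0$ because $\Delta^* < \epsilon'^2 < \Delta_1$. To force $\Delta_{\mathrm T} \leq \epsilon'^2$ it then suffices to pick $\mathrm{T}$ so that $\kappa^{\mathrm{T}-1}(\Delta_1 - \Delta^*) \leq \epsilon'^2 - \Delta^*$, i.e. $\mathrm{T} - 1 \geq \frac{\log\frac{\Delta_1 - \Delta^*}{\epsilon'^2 - \Delta^*}}{\log(1/\kappa)}$. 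Treating $\log(1/\kappa)$ as a parameter-dependent constant and bounding $\Delta_1 - \Delta^* \leq \Delta_1$ (since $\Delta^* > 0$) yields $\mathrm{T} = \mathcal{O}\big(\log[\Delta_1/(\epsilon'^2 - \Delta^*)]\big)$, matching the statement.

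The main obstacle I anticipate is bookkeeping rather than conceptual: one must confirm that the two conditions imposed on $\gamma$ align exactly with (i) $\kappa \in (0,1)$ and (ii) $\Delta^* < \epsilon'^2$, and crucially that $\epsilon'^2 - \Delta^* > 0$ so the logarithm is well-defined and positive. The algebra for $\Delta^*$ must be done carefully, as it is precisely the substitution $\eta' = b/(\gamma c_1)$ that makes the messy fraction $\mu/(1-\kappa)$ collapse into the clean form $(c_2/c_1 + \gamma c_3/b)/(\gamma-1)$ quoted in the theorem.
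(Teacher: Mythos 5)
Your proposal is correct, and it follows the same skeleton as the paper's proof: both treat the recursion as an affine contraction, impose $\eta' = b/(\gamma c_1)$, and verify exactly the two conditions that your $\max$-hypothesis on $\gamma$ encodes (the paper's ``Claim 1,'' $\kappa \in (0,1)$, and ``Claim 2,'' which in your notation is $\Delta^* < \epsilon'^2$). The difference is organizational, and it buys you something real. The paper unrolls the recursion into a geometric sum $\alpha^{t-1}\Delta_1 + \beta(1+\alpha+\cdots+\alpha^{t-2})$, sets it equal to $\epsilon'^2$, and then proves Claim 2, $0 < \epsilon'^2(1-\alpha)-\beta$, by a page-long computation with general $\eta'$ (completing squares in $\eta'$ twice, then substituting $\eta' = b/(\gamma c_1)$ at the very end) to reduce it to $\gamma > \bigl(\epsilon'^2 + \tfrac{c_2}{c_1}\bigr)/\bigl(\epsilon'^2 - \tfrac{c_3}{b}\bigr)$. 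You instead substitute $\eta' = b/(\gamma c_1)$ \emph{first}, which collapses the fixed point to the closed form $\Delta^* = \mu/(1-\kappa) = \bigl(\tfrac{c_2}{c_1} + \gamma\tfrac{c_3}{b}\bigr)/(\gamma-1)$; the paper's Claim 2 then becomes the one-line denominator-clearing equivalence $\Delta^* < \epsilon'^2 \iff \gamma > \bigl(\epsilon'^2 + \tfrac{c_2}{c_1}\bigr)/\bigl(\epsilon'^2 - \tfrac{c_3}{b}\bigr)$, and the subtract-the-fixed-point induction $\Delta_{t+1}-\Delta^* \le \kappa(\Delta_t - \Delta^*)$ (valid since $\kappa \ge 0$) replaces the geometric-sum bookkeeping. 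The two arguments are mathematically equivalent --- $\alpha^{t-1}\Delta_1 + \beta\tfrac{1-\alpha^{t-1}}{1-\alpha} = \alpha^{t-1}(\Delta_1 - \Delta^*) + \Delta^*$ --- but your ordering of the substitution makes the hardest step of the paper's proof essentially trivial, and also makes transparent \emph{why} the quantity $\bigl(\tfrac{c_2}{c_1}+\gamma\tfrac{c_3}{b}\bigr)/(\gamma-1)$ appears inside the logarithm: it is the contraction's fixed point.
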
 

%\ignore{
\begin{proof}
%Corresponding to any $\eta'$ 
Let us define $\alpha = 1-\eta' b + \eta'^2 c_1$ and $\beta = \eta'^2c_2 + \eta'c_3$.
Then by unrolling the recursion we get, 
\[ \Delta_t \leq \alpha \Delta_{t-1} + \beta \leq \alpha (\alpha \Delta_{t-2} + \beta ) + \beta \leq  ...\leq \alpha^{t-1}\Delta_1 + \beta (1+\alpha+\ldots+\alpha^{t-2}). \]

%Thus if $\Delta_{\rm T} \leq \epsilon'^2$ for some $T \in \bb{Z}^+$ it is sufficient that 
%the sequence $\{\Delta_t\}_{t \geq 1}$ decays such that  for some ,
%Such a ${\tm T}$  the above equation exists if  
Now suppose that the following are true for $\epsilon'$ as given and for $\alpha ~\& ~\beta$ (evaluated for the range of $\eta'$s as specified in the theorem), 
\begin{itemize}
    \item[ ] {\bf Claim 1 :} $\alpha \in (0,1)$
    \item[ ] {\bf Claim 2 :} $0 < \epsilon'^2(1-\alpha) - \beta$
\end{itemize}

We will soon show that the above claims are true. Now if ${\rm T}$ is such that we have, 

\[ \alpha^{{\rm T}-1}\Delta_1 + \beta (1+\alpha+\ldots+\alpha^{{\rm T}-2}) 
= \alpha^{{\rm T}-1}\Delta_1 + \beta \cdot \frac{1 - \alpha^{\rm T-1}}{1-\alpha}  = \epsilon'^2, \] then $\alpha^{{\rm T} -1} = \frac{\epsilon'^2(1-\alpha) - \beta}{\Delta_1(1-\alpha) - \beta}$. 
Note that {\bf Claim 2} along with with the assumption that $\epsilon'^2 < \Delta_1$ ensures that the numerator and the denominator of the fraction in the RHS are both positive. Thus we can solve for ${\rm T}$ as follows,  

\begin{align}\label{eq:recursionbound}
\nonumber ({\rm T}-1) \log\left(\frac{1}{\alpha}\right) &= \log \bigg[\frac{\Delta_1(1-\alpha)-\beta}{\epsilon'^2  (1-\alpha)-\beta}\bigg] \implies {\rm T}  = {\large \mathcal{O}}\Bigg(\log \Bigg[~ 
\frac{\Delta_1}{\epsilon'^2 - \Big(\frac{\frac{c_2}{c_1}+\gamma \cdot \frac{c_3}{b}}{\gamma - 1}\Big)}~\Bigg]\Bigg).
\end{align} 
In the second equality above we have estimated the expression for ${\rm T}$ after substituting $\eta' = \frac{b}{\gamma c_1}$ in the expressions for $\alpha$ and $\beta$.

~\\
\textbf{Proof of claim 1 : $\alpha \in (0,1)$}

We recall that we have set $\eta' = \frac{b}{\gamma c_1}$. This implies that,
$\alpha  = 1 - \frac{b^2}{c_1}\cdot \Big ( \frac {1}{\gamma} - \frac{1}{\gamma^2} \Big )$. Hence $\alpha > 0$ is ensured by the assumption that $\gamma > \frac{b^2}{c_1}$. And $\alpha < 1$ is ensured by the assumption that $\gamma >1$ 

%since $\frac{b^2}{c_1}\cdot \frac{\gamma -1}{\gamma^2} > 0$ due to the assumption that $\gamma > \frac{\epsilon'^2 + \frac{c_2}{c_1}}{\epsilon'^2   - \frac{c_3}{b}} > 1$. Therefore $\alpha \in (0,1)$. c

\textbf{Proof of claim 2 : $0 < \epsilon'^2(1-\alpha) - \beta$}

We note the following, 
\begin{align*}
    - \frac{1}{\epsilon'^2} \cdot \left(\epsilon'^{2}(1-\alpha) -\beta\right) & =\alpha - \Big (1 - \frac{\beta}{\epsilon'^2} \Big ) \\
    & =  1-\frac{b^2}{4c_1} + \Big ( \eta' \sqrt{c_1} - \frac{b}{2\sqrt{c_1}} \Big )^2  - \Big ( 1 - \frac{\beta}{\epsilon'^2}  \Big )\\
    &=\frac{\eta'^2c_2 + \eta'c_3}{\epsilon'^2} + \Big ( \eta' \sqrt{c_1} - \frac{b}{2\sqrt{c_1}} \Big )^2 - \frac{b^2}{4c_1}\\
    &= \frac{\left(\eta'\sqrt{c_2} + \frac{c_3}{2\sqrt{c_2}}\right)^2 -  \frac{c_3^2}{4c_2}}{\epsilon'^2} + \Big ( \eta' \sqrt{c_1} - \frac{b}{2\sqrt{c_1}} \Big )^2 - \frac{b^2}{4c_1}\\
    &= \eta'^2 \Bigg (  \frac{1}{\epsilon'^2} \cdot \left(\sqrt{c_2} + \frac{c_3}{2\eta'\sqrt{ c_2}}\right)^2 + \Big (\sqrt{c_1} - \frac{b}{2\eta ' \sqrt{c_1}} \Big )^2 \\
    & \quad\quad - \frac{1}{\eta'^2} \Bigg[  \frac{b^2}{4c_1} +  \frac{1}{\epsilon'^2}\left( \frac{c_3^2}{4c_2}\right) \Bigg] \Bigg )
\end{align*}

Now we substitute $\eta' = \frac{b}{\gamma c_1}$ for the quantities in the expressions inside the parantheses to get,
\begin{align*}
     - \frac{1}{\epsilon'^2} \cdot \left(\epsilon'^{2}(1-\alpha) -\beta\right) 
    & = \alpha - \Big (1 - \frac{\beta}{\epsilon'^2} \Big ) = \eta'^2 \Bigg ( \frac{1}{\epsilon'^2 } \cdot \left(\sqrt{c_2} + \frac{\gamma c_1c_3}{2b\sqrt{ c_2}}\right)^2 \\
    & \quad \quad \quad +  c_1\cdot\Big (\frac{\gamma}{2} - 1 \Big )^2 - c_1\frac{\gamma^2}{4} - \frac{1}{\epsilon'^2}\cdot \frac{\gamma^2c_1^2c_3^2}{4b^2c_2}  \Bigg )\\
    &= \eta'^2 \Bigg ( \frac{1}{\epsilon'^2} \cdot \left(\sqrt{c_2} + \frac{\gamma c_1c_3}{2b\sqrt{ c_2}}\right)^2 + c_1(1-\gamma) - \frac{1}{\epsilon'^2}\cdot \frac{\gamma^2c_1^2c_3^2}{4b^2c_2} \Bigg )\\
    &= \frac{\eta'^2}{\epsilon'^2} \Bigg (   c_2 + \frac{\gamma c_1c_3}{b} - \epsilon'^2  c_1(\gamma -1) \Bigg )\\
    &= \frac{\eta'^2c_1}{\epsilon'^2} \Bigg (   (\epsilon'^2   + \frac{c_2}{c_1})  - \gamma \cdot \left (\epsilon'^2 - \frac{c_3}{b} \right )  \Bigg ) 
\end{align*}    

Therefore, $-\frac{1}{\epsilon'^2}\left(\epsilon'^{2}(1-\alpha) - \beta\right) < 0$ since by assumption $\epsilon'^2   > \frac{c_3}{b},~\text{ and }~ \gamma > \frac{\left(\epsilon'^2 + \frac{c_2}{c_1}\right)}{\epsilon'^2   - \frac{c_3}{b} }.$

%Now recalling our assumptions that $\epsilon'^2   > \frac{c_3}{b}$ and $\gamma > \frac{\left(\epsilon'^2 + \frac{c_2}{c_1}\right)}{\epsilon'^2   - \frac{c_3}{b} }$ we see that the above implies 
%$0 > \alpha - \Big (1 - \frac{\beta}{\epsilon'^2} \Big ) \Leftrightarrow \epsilon'^2 (1-\alpha) -\beta > 0$ We further recall that we have assumed $\epsilon'^2 < C = \Delta_1$ and hence we have, $C(1-\alpha)-\beta >\epsilon'^2 (1-\alpha)-\beta ~(>0)$.

%\[ \epsilon'^2 (1-\alpha) -\beta > 0 \Leftrightarrow 1 - \alpha > \frac{\beta}{\epsilon'^2} \Leftrightarrow 0 > \alpha - \Big (1 - \frac{\beta}{\epsilon'^2} \Big )  \] 
\end{proof}
%}

% Commented out to retain anonymity during review

% \section{Python codes for reproduction}
% The basic Python codes used to obtain simulation results in Section 4 can be found in 

% \section{Hyperparameter values for each simulation setup}

\newpage
\section{Neuro-Tron versus SGD comparisons for different input data distributions}
\label{app:tron_vs_sgd}

\begin{figure}[htbp!]
\centering
\begin{subfigure}{.5\textwidth}
  \centering
  \includegraphics[width=1.\linewidth]{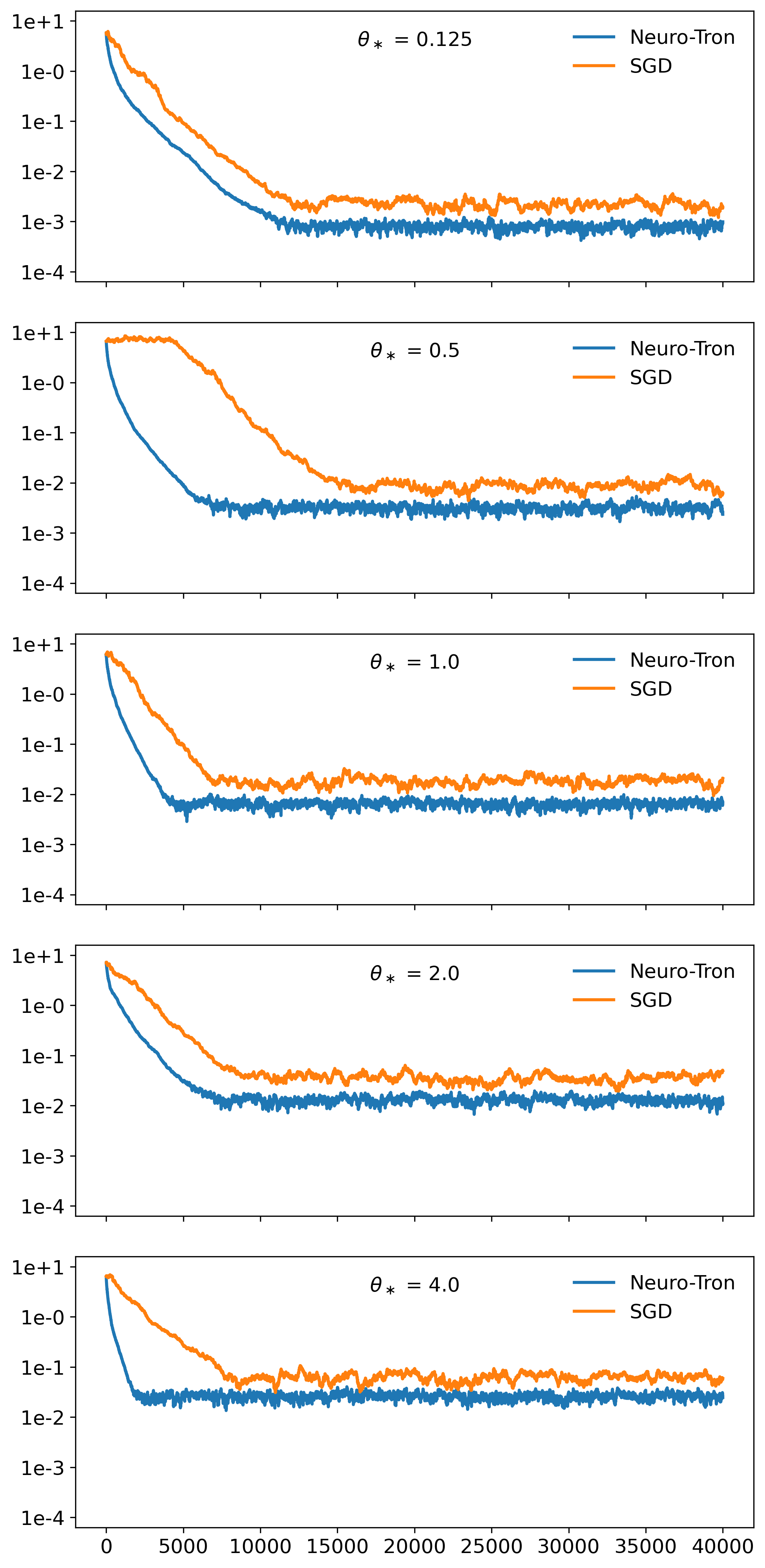}
  \caption{NeuroTron vs SGD per $\theta_{\star}$ value.}
  \label{fig:tron_vs_sgd_theta_laplace}
\end{subfigure}%
\begin{subfigure}{.5\textwidth}
  \centering
  \includegraphics[width=1.\linewidth]{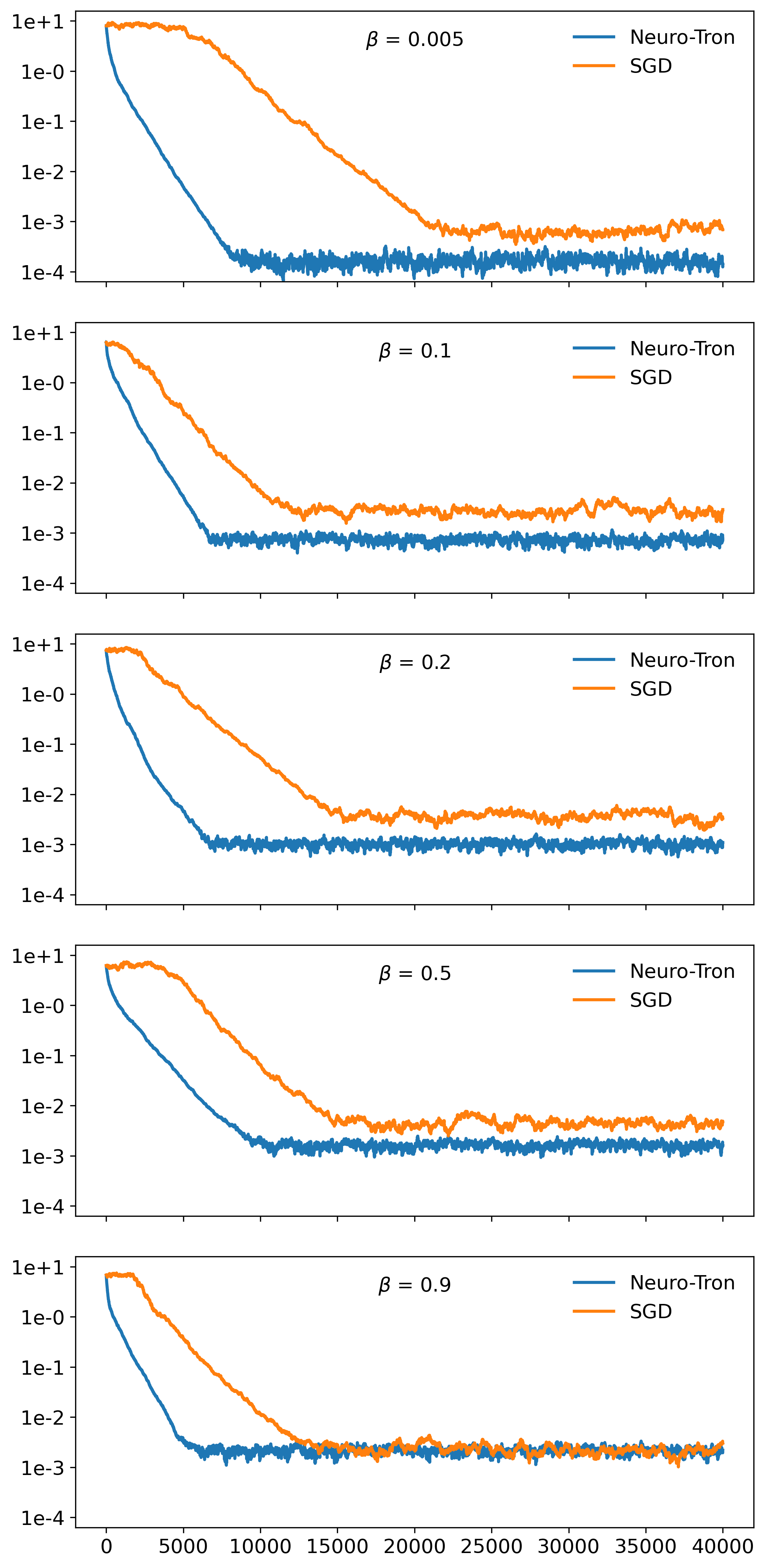}
  \caption{NeuroTron vs SGD per $\beta$ value.}
  \label{fig:tron_vs_sgd_beta_laplace}
\end{subfigure}
\caption{Simulation-based comparison
between Algorithm \ref{tronalgo} (Neuro-Tron) and SGD.
Input data are sampled from $\mbox{Laplace}(\mu=0, b=2)$.
(a): Parameter recovery errors
for different adversarial noise bounds $\theta_{\star}$.
(b): Parameter recovery errors
for different probabilities $\beta$ of adversarial attack.
}
\label{fig:tron_vs_sgd_laplace}
\end{figure}

\clearpage 
\begin{figure}[htbp!]
\centering
\begin{subfigure}{.5\textwidth}
  \centering
  \includegraphics[width=1.\linewidth]{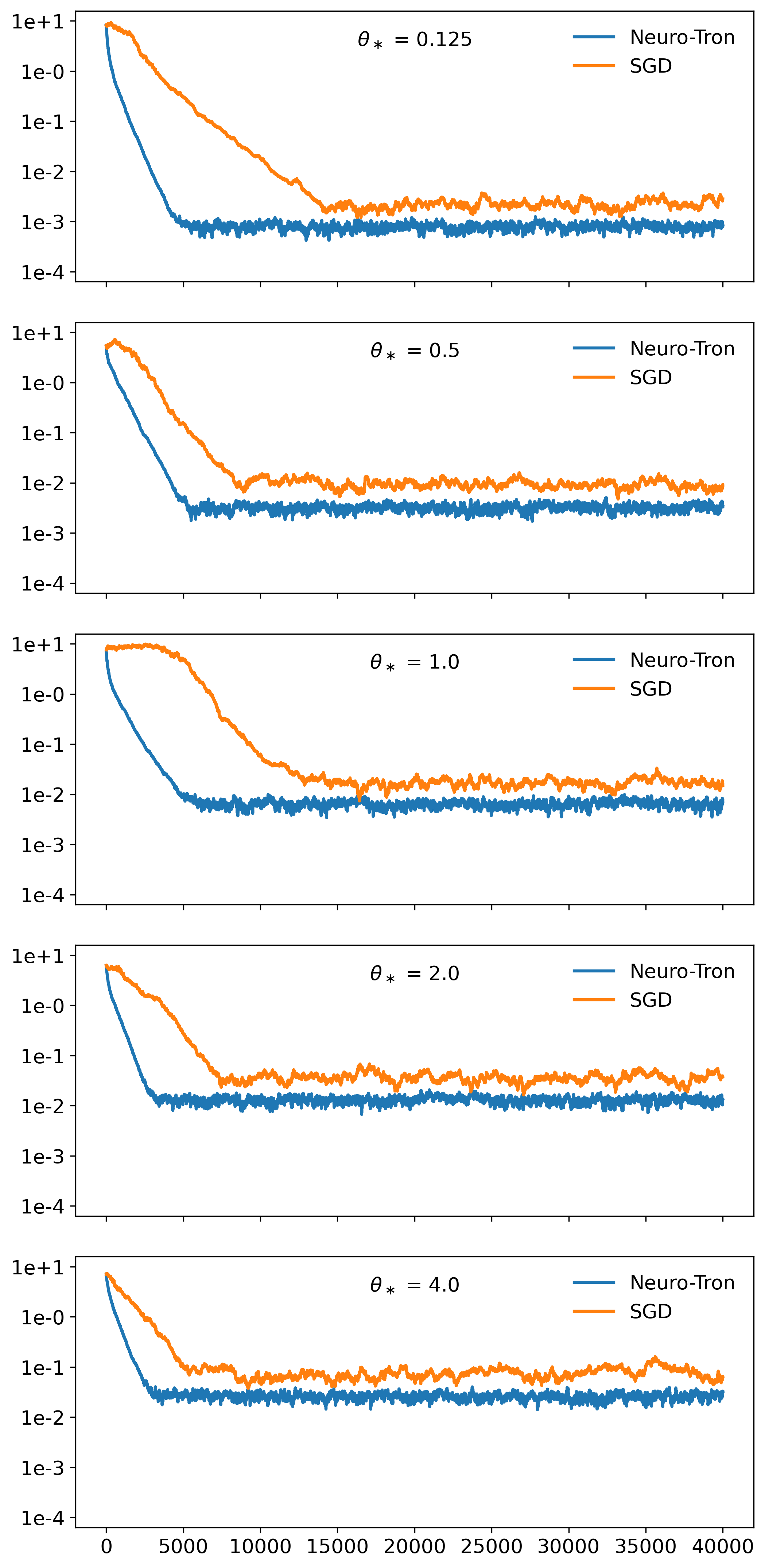}
  \caption{NeuroTron vs SGD per $\theta_{\star}$ value.}
  \label{fig:tron_vs_sgd_theta_normal_large_var}
\end{subfigure}%
\begin{subfigure}{.5\textwidth}
  \centering
  \includegraphics[width=1.\linewidth]{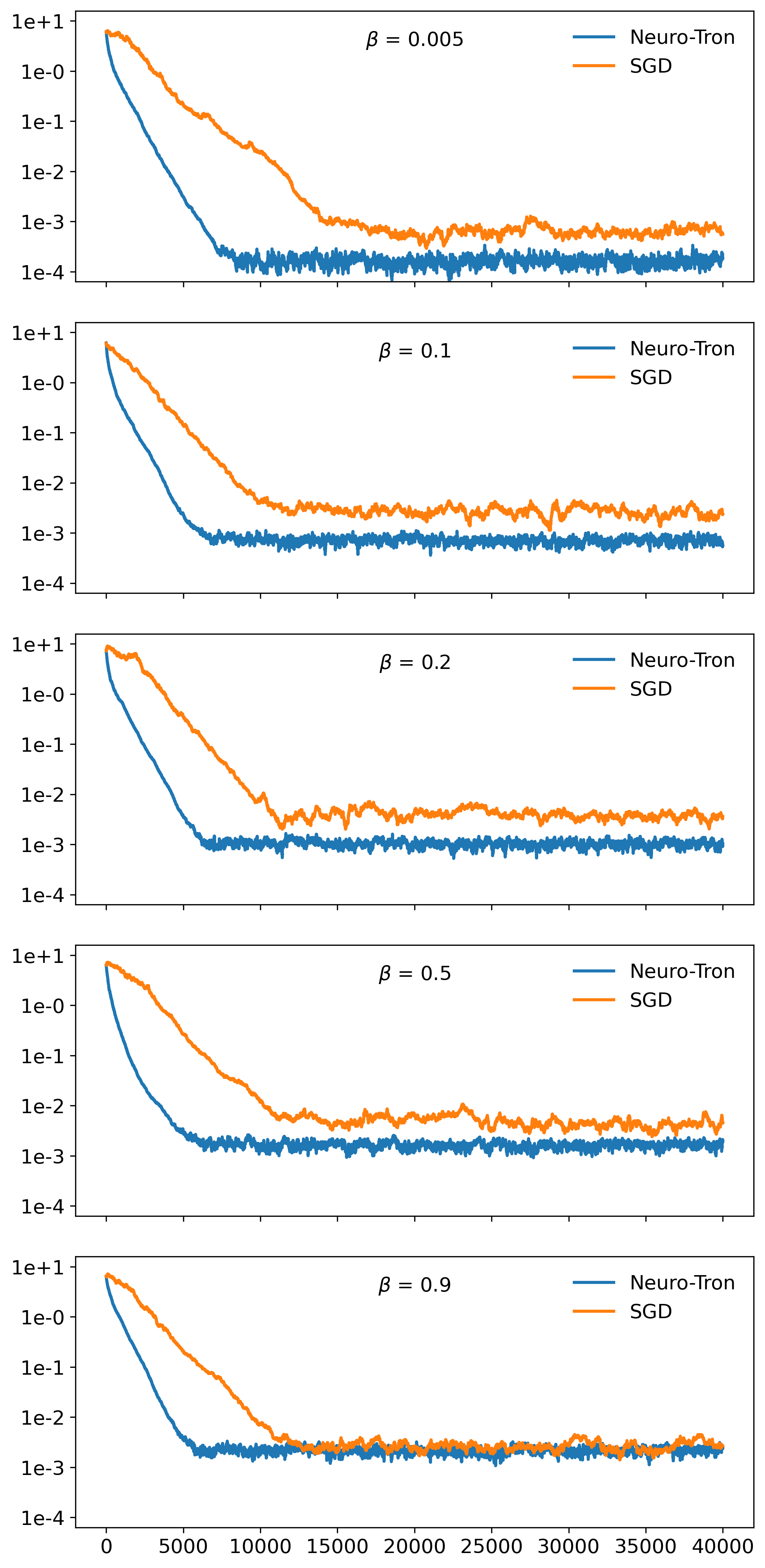}
  \caption{NeuroTron vs SGD per $\beta$ value.}
  \label{fig:tron_vs_sgd_beta_normal_large_var}
\end{subfigure}
\caption{Simulation-based comparison
between Algorithm \ref{tronalgo} (Neuro-Tron) and SGD.
Input data are sampled from $\mathcal{N}(\mu=0, \sigma^2=9)$.
(a): Parameter recovery errors
for different adversarial noise bounds $\theta_{\star}$.
(b): Parameter recovery errors
for different probabilities $\beta$ of adversarial attack.
}
\label{fig:tron_vs_sgd_normal_large_var}
\end{figure}

\clearpage 
\section{Neuro-Tron error under no attack}
\label{app:tron_no_attack}

\begin{figure}[htbp!]
\centering
\includegraphics[width=1.\linewidth]{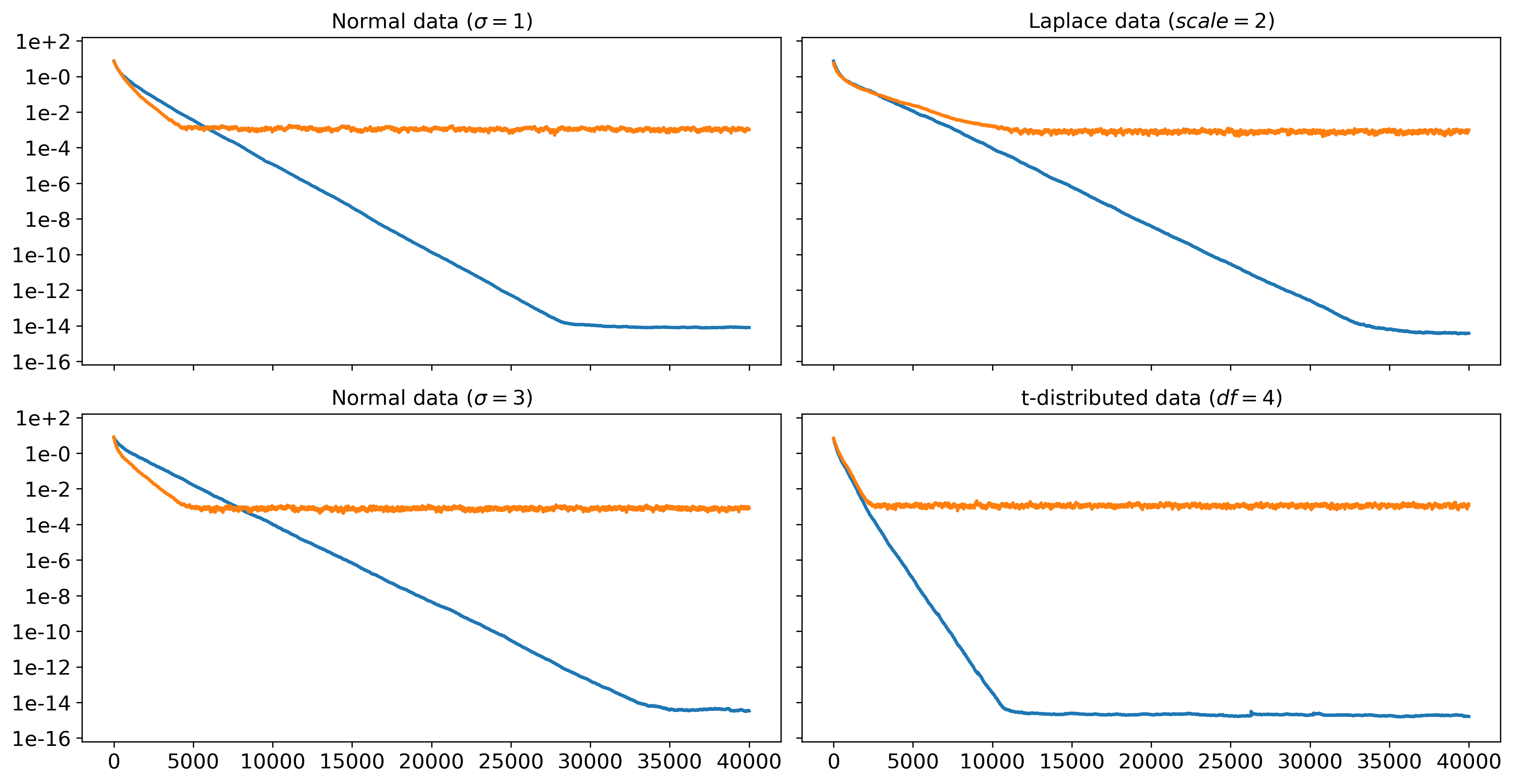}
\caption{Performance demonstration of Algorithm \ref{tronalgo} (Neuro-Tron)
in the absence of data-poisoning attack ($\theta_{*}=0$).
Parameter recovery errors are shown for different input data distributions.
Blue and orange lines correspond to
noise bounds $\theta_{*}=0$ (no attack)
and $\theta_{*}=0.125$ (attack of relatively small magnitude).}
\label{fig:no_attack}
\end{figure}

% \url{https://colab.research.google.com/drive/1egVLQjy-a6NyR9yyzVWM0lMMmdj33lTa?usp=sharing}
% \clearpage

\newpage
\section{Demonstrating the utility of heavier outer layer weights}
\label{ssc:highqsim}

\begin{figure}[htbp!]
\centering
\begin{subfigure}{.5\textwidth}
  \centering
  \includegraphics[width=1.\linewidth]{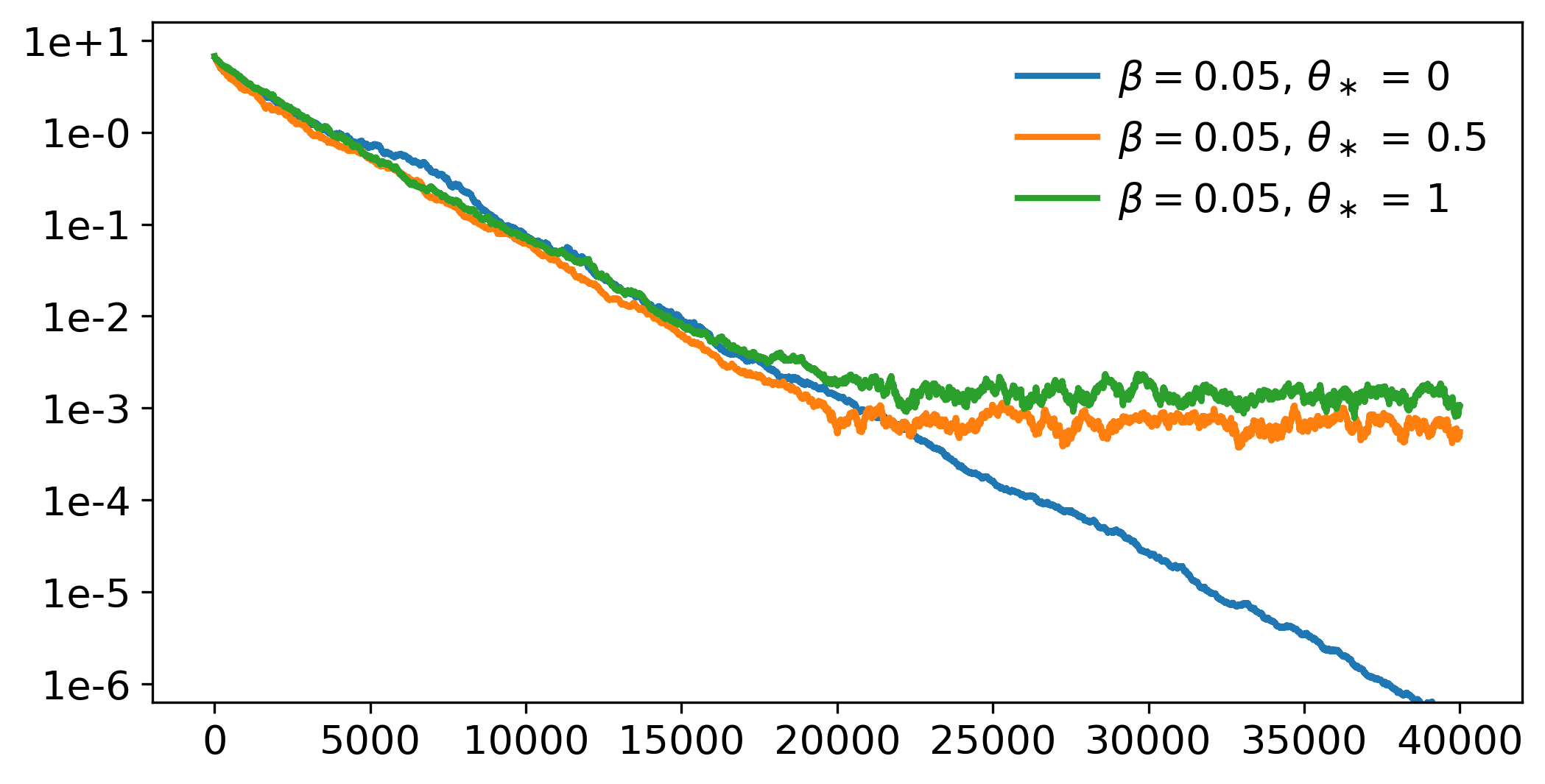}
  \caption{$q_i=1$.}
  \label{fig:highq1}
\end{subfigure}%
\begin{subfigure}{.5\textwidth}
  \centering
  \includegraphics[width=1.\linewidth]{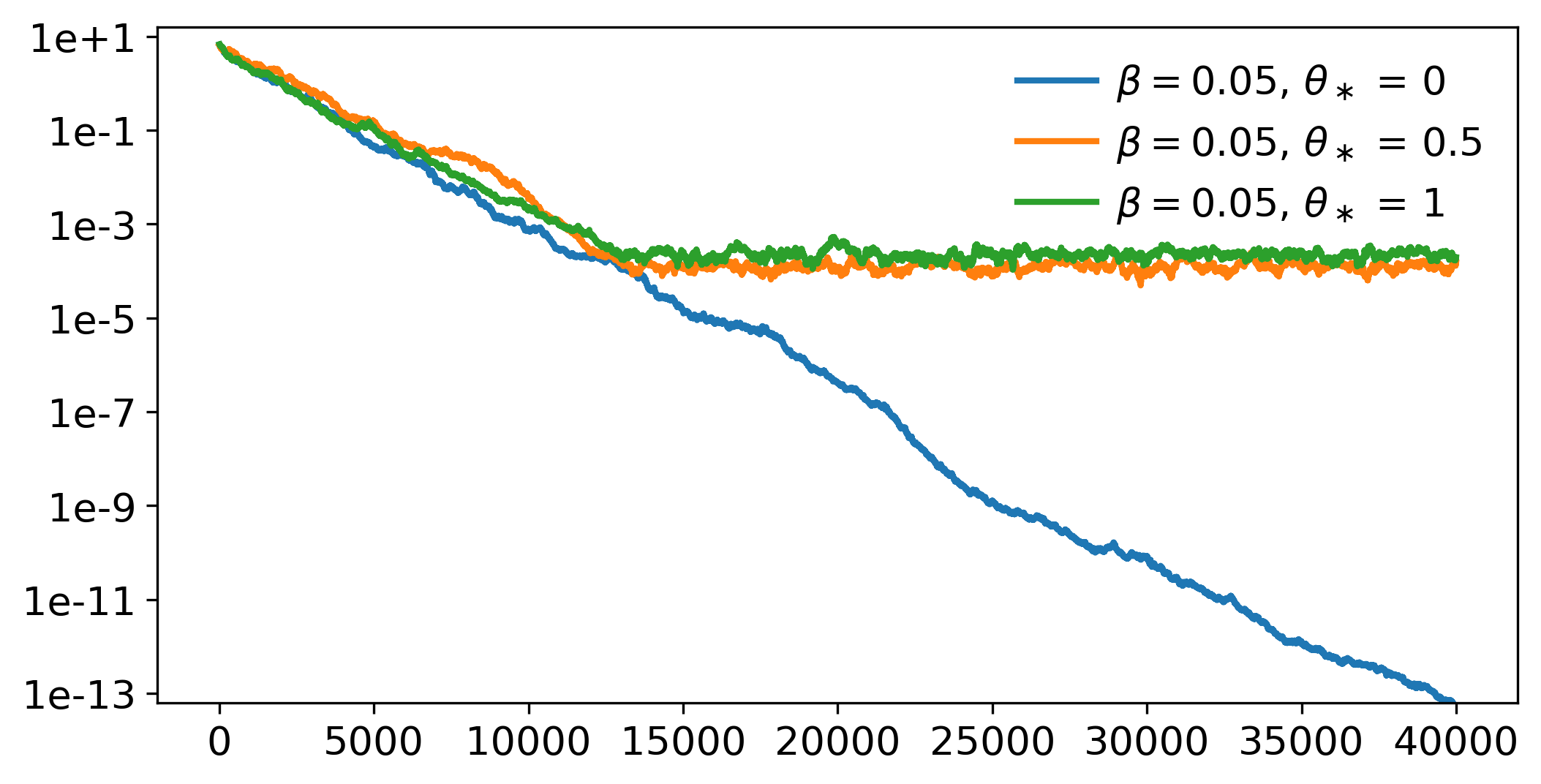}
  \caption{$q_i=10$.}
  \label{fig:highq10}
\end{subfigure}
\caption{Simulation-based validation of the advantage when heavier outer layer weights are used, as described in Section \ref{ssc:highq}.
The experiments are run for (a) $q_i=1$ and (b) $q_i=10$ for all $i$,
with $(\beta_1,\theta_*)$ taking values $(0.05,0), (0.05,0.5)$ and $(0.05,1)$.}
\label{fig:highq}
\end{figure}

Based on the setup of Section \ref{ssc:highq} for Algorithm~\ref{tronalgo},
here we perform a simulation-based comparison between
$q_i=1$ and $q_i=10$ for all $i$.
We run both experiments with the same $\M$, $A_i$ and other hyperparameters.
The simulation adheres to the overall experimental setup of Section~\ref{sec:sim}.
Figure \ref{fig:highq} demonstrates
the uniform advantage of having heavier outer layer weights
in terms of achieving better accuracy (lower parameter recovery error).

% \begin{figure}[!htbp]
%     \centering
%     \includegraphics[width=\textwidth]{COLT/bestimage4.png}
%     \caption{Simulation-based validation of the advantage when heavier outer layer weights are installed as described in Subsection \ref{ssc:highq}. The experiments in the two images are run on $q_i=1$ and $q_i=10$ for all $i$ with other parameters $(\beta_1,\theta_*)$ taking values $(0.05,0), (0.05,0.5)$ and $(0.05,1)$}
%     \label{fig:highq}
% \end{figure}

\end{document}